\documentclass{article}
\usepackage{tikz}
\usetikzlibrary{patterns}
\usetikzlibrary{shapes.geometric}
\usetikzlibrary{shapes.misc}

\usepackage{microtype}
\usepackage{graphicx}
\usepackage{subfigure}
\usepackage{booktabs} 
\usepackage{pifont}
\usepackage{multirow}
\usepackage[most]{tcolorbox}
\usepackage{arydshln}
\usepackage{xspace}
\usepackage{fancyvrb} 
\usepackage{listings}
\usepackage{enumitem}
\usepackage{wrapfig}

\lstdefinestyle{promptstyle}{
  basicstyle=\ttfamily\small,
  backgroundcolor=\color{gray!10},
  frame=single,
  breaklines=true,
  showstringspaces=false,
  columns=fullflexible
}

\usepackage{hyperref}

\usepackage[accepted]{icml2026}

\usepackage{amsmath}
\usepackage{amssymb}
\usepackage{mathtools}
\usepackage{amsthm}
\usepackage[capitalize,noabbrev]{cleveref}

\theoremstyle{plain}
\newtheorem{theorem}{Theorem}[section]

\theoremstyle{definition}
\newtheorem{definition}[theorem]{Definition}

\theoremstyle{remark}

\newcommand{\DTtwo}{DT$^{2}$\xspace}
\definecolor{seafoam}{RGB}{130, 220, 190}
\definecolor{policyA}{RGB}{0, 90, 220}   
\definecolor{policyB}{RGB}{150, 40, 140} 
\definecolor{policyC}{RGB}{210, 80, 0}
\definecolor{softred}{RGB}{250, 190, 170}

\icmltitlerunning{Decision-Targeted Digital Twins}

\begin{document}

\twocolumn[
\icmltitle{\DTtwo: Decision-Targeted Digital Twins}


\icmlsetsymbol{equal}{*}

\begin{icmlauthorlist}
\icmlauthor{Harry Amad}{cam}
\icmlauthor{Mihaela van der Schaar}{cam}
\end{icmlauthorlist}

\icmlaffiliation{cam}{Department of Applied Mathematics and Theoretical Physics, University of Cambridge, Cambridge, United Kingdom}

\icmlcorrespondingauthor{Harry Amad}{hmka3@cam.ac.uk}

\icmlkeywords{Digital twin, DT, decision making, off policy evaluation, OPE, policy selection, reinforcement learning, RL, model based, MBRL, simulation, world model, Machine Learning, artificial intelligence, ML, AI, ICML}

\vskip 0.3in
]



\printAffiliationsAndNotice{}

\begin{abstract}
A digital twin (DT) is a virtual model of a real-world system that can assist decision-making by simulating scenarios induced by different policies. However, typical machine learning-based DTs do not optimise for this use case. We prove that, when model capacity is limited, training DTs to minimise one-step transition errors can produce suboptimal models for ranking sets of policies according to a reward function. We further show that this holds empirically, even with expressive model classes. To address this, we introduce \DTtwo, a decision-targeted DT training paradigm. Firstly, \DTtwo uses fitted Q-evaluation to estimate values of candidate policies from offline data. A DT is then trained to generate rollouts that preserve pairwise policy rankings derived from these proxy ground-truth values with an architecture-agnostic loss function. We empirically demonstrate the efficacy of our method across a range of settings and architectures. \DTtwo consistently improves policy ranking and reduces decision regret during policy selection relative to conventional DT training, both for policies used during training and for unseen policies, while maintaining a good level of raw simulation fidelity.
\end{abstract}

\section{Introduction}
\label{sec:intro}

A digital twin (DT) is a virtual model of a real-world system that can simulate its dynamics. DTs are appealing in many domains, such as finance \citep{slepneva2021impact}, climate science \citep{voosen2020europe}, manufacturing \citep{rosen2015about}, energy \cite{ismail2024comprehensive}, agriculture \cite{purcell2023digital}, robotics \cite{mazumder2023towards}, and medicine \citep{katsoulakis2024digital}, due to their ability to guide decision-making processes. This typically manifests with a user deliberating over DT-generated simulations of potential policies \citep{tao2018digital, corral2020digital} to select the best one, or establish a preference ordering. Much of the DT literature highlights this critical use case, asserting that a useful DT is one that \textit{``aids decision making"} \citep{wagg2020digital}, and \textit{``[informs] decisions that realize value"} \citep{national2023foundational}. To enable such decision support, DTs must generate realistic simulations that allow for user interrogation and verification, while ensuring the learned dynamics lead to correct rankings of relevant policies.

In machine learning (ML)-based DTs, however, this goal of decision support is often overlooked. Typically, ML-based DTs involve a model that is trained to minimise a measure of one-step transition error, such as mean squared error (MSE) or negative log likelihood (NLL), across all variables and time points in a dataset of observed state-action trajectories \citep{san2023decentralized, kuang2024med, holt2024automatically, canzaniello2024leveraging, makarov2024large, amad2025continuously}. While this can lead to high fidelity simulators, such direction-, variable-, and timepoint-agnostic objectives do not explicitly focus on correctly ranking policies, and they can misalign DTs with their primary downstream purpose by under‑emphasising variables or temporal slices that are especially crucial for discriminating between policies.

For example, consider a medical DT, used to help clinicians treat septic shock. Despite large data availabilities in ICU settings, clinicians typically focus on only a few key indicators to make timely decisions \citep{pickering2013data}, like blood pressure and lactate levels for sepsis. A DT trained with a typical, variable-agnostic, simulation loss considers non-critical dimensions as equivalent to such key indicator variables. By wasting model capacity on largely irrelevant features, such a DT could provide suboptimal decision support, jeopardising patient health. For a visual example of how simulation training can lead to poor decisions due to being direction-agnostic, see Figure~\ref{fig:fig-1}.

In this work, we emphasise that, for human-in-the-loop decision support, a DT need not be a perfect replica of previously observed reality, and we contend that DT construction should move away from such context-agnostic optimisation, towards what is best for decision support.

\begin{figure}[t]
\centering

\begin{tikzpicture}
    
    \node[fill=seafoam, minimum width=1em, minimum height=0.8em, inner sep=0pt] at (-3, 0.32) {};
    \node[anchor=west] at (-2.9, 0.3) {\small Optimal glucose};

    \draw[very thick, color=policyA, draw opacity = 0.3] (-0.4,  0.4) -- (0,  0.4);
    \draw[very thick, color=policyB, draw opacity = 0.3] (-0.4,  0.3) -- (0,  0.3);
    \draw[very thick, color=policyC, draw opacity = 0.3] (-0.4,  0.2) -- (0,  0.2);
    \node[anchor=west] at (0, 0.3) {\small True rollouts};

    \draw[very thick, dashed, color=policyA] (2.2,  0.4) -- (2.8, 0.4);
    \draw[very thick, dashed, color=policyB] (2.2,  0.3) -- (2.8,  0.3);
    \draw[very thick, dashed, color=policyC] (2.2,  0.2) -- (2.8,  0.2);
    \node[anchor=west] at (2.8, 0.3) {\small DT rollouts};

\end{tikzpicture}

\begin{minipage}[t]{1\linewidth}
\centering
\begin{tikzpicture}[x=0.7cm,y=0.7cm]
\draw[->, thin] (-1.5,0) -- (7,0)
    node[pos=0.2, below, font=\small] {$\hat{\Phi}_1$}
    node[pos=0.95, below, font=\small] {Time};
\draw[->, thin] (0,0) -- (0,3.2) node[above, rotate=90, xshift=-15pt, font=\small] {Glucose};

\fill[gray!20]   (0.01,0.01) rectangle (6.8,1);
\fill[seafoam] (0.01,1) rectangle (6.8,2);
\fill[gray!20]   (0.01,2) rectangle (6.8,3);

\draw[very thick, domain=-1.5:0, samples=80]
  plot (\x, {1.5 + 0.25*sin(180*(\x+2))});

\draw[very thick, color=policyA, font=\Large, opacity = 0.3]
  (0,1.5) .. controls (0.1,2.6) and (4,2.1) .. (6.5,2.1);
\draw[very thick, color=policyC, font=\Large, opacity = 0.3]
  (0,1.5) .. controls (0.1,0.5) and (4,0.9) .. (6.5,0.8);
\draw[very thick, color=policyB, font=\Large, opacity = 0.3]
  (0,1.5) .. controls (1,2.8) and (2,1) .. (6.5,1.3);

\draw[very thick, dashed, color=policyA]
  (0,1.5) .. controls (1,2) and (3,2) .. (6.5,1.9);
\draw[very thick, dashed, color=policyC]
  (0,1.5) .. controls (0.1,1) and (2,1.1) .. (6.5,1.1);
\draw[very thick, dashed, color=policyB]
  (0,1.5) .. controls (2,3.4) and (2,0.6) .. (6.5,0.9);

\fill (0,1.5) circle (2pt);
\end{tikzpicture}
\end{minipage}
\begin{minipage}[t]{1\linewidth}
\centering
\begin{tikzpicture}[x=0.7cm,y=0.7cm]
\draw[->, thin] (-1.5,0) -- (7,0)
    node[pos=0.2, below, font=\small] {$\hat{\Phi}_2$}
    node[pos=0.95, below, font=\small] {Time};

\draw[->, thin] (0,0) -- (0,3.2) node[above, rotate=90, xshift=-15pt, font=\small] {Glucose};

\fill[gray!20]   (0.01,0.01) rectangle (6.8,1);
\fill[seafoam] (0.01,1) rectangle (6.8,2);
\fill[gray!20]   (0.01,2) rectangle (6.8,3);

\draw[very thick, domain=-1.5:0, samples=80]
  plot (\x, {1.5 + 0.25*sin(180*(\x+2))});

\draw[very thick, color=policyA, font=\Large, draw opacity = 0.3]
  (0,1.5) .. controls (0.1,2.6) and (4,2.1) .. (6.5,2.1);
\draw[very thick, color=policyC, font=\Large, draw opacity = 0.3]
  (0,1.5) .. controls (0.1,0.5) and (4,0.9) .. (6.5,0.8);
\draw[very thick, color=policyB, font=\Large, draw opacity = 0.3]
  (0,1.5) .. controls (1,2.8) and (2,1) .. (6.5,1.3);

\draw[very thick, dashed, color=policyA]
  (0,1.5) .. controls (1,3) and (3,2.9) .. (6.5,2.9);
\draw[very thick, dashed, color=policyC]
  (0,1.5) .. controls (1,0.1) and (4,0.1) .. (6.5,0.1);
\draw[very thick, dashed, color=policyB]
  (0,1.5) .. controls (2,1.5) and (4,1.5) .. (6.5,1.5);

\fill (0,1.5) circle (2pt);
\end{tikzpicture}
\end{minipage}

\caption{
We visualise two diabetes DTs, $\hat{\Phi}_1$ (top) and $\hat{\Phi}_2$ (bottom), under treatment plans with \textcolor{policyC}{\textbf{high}}, \textcolor{policyA}{\textbf{low}}, and \textcolor{policyB}{\textbf{adaptive}} insulin levels, and we consider the `value' of a treatment plan as the time it spends in the optimal glucose region. $\hat{\Phi}_1$ has lower MSE from the true rollouts, yet it leads to an incorrect treatment plan preference ordering (\textcolor{policyC}{\textbf{high}} $\approx$ \textcolor{policyA}{\textbf{low}} $\succ$ \textcolor{policyB}{\textbf{adaptive}}). In contrast, $\hat{\Phi}_2$ leads to a correct preference ordering (\textcolor{policyB}{\textbf{adaptive}} $\succ$ \textcolor{policyC}{\textbf{high}} $\approx$ \textcolor{policyA}{\textbf{low}}), despite its higher MSE, as its rollouts traverse the correct glucose regions.}
\label{fig:fig-1}
\end{figure}

For this purpose, we introduce \DTtwo: a \textbf{d}ecision-\textbf{t}argeted \textbf{DT} training paradigm that balances between seeking fidelity in general, and in those parts of the transition distribution that are most relevant for ranking candidate policies according to a known reward function. Since ground-truth rankings amongst candidate policies are not generally known \textit{a priori}, we firstly estimate policy values using off-policy evaluation (OPE) methods, which can estimate scalar policy values from offline data \citep{fu2021benchmarks} but, distinct from DTs, offer little in the way of interpretability, limiting their application in high-stakes domains \citep{rudin2019stop}. We use these OPE estimates to construct proxy ground-truth pairwise rankings amongst candidate policies, and introduce a differentiable ranking loss function that operates alongside standard simulation objectives to encourage DTs to generate simulations that preserve these proxy rankings. By doing so, we distil the ranking ability of OPE into the interpretable structure of a DT. A tunable hyperparameter, $\lambda$, modulates the trade-off between raw simulation accuracy and reconstruction of OPE policy rankings. See code here \url{https://github.com/harrya32/decision-targeted-dt} and here \url{https://github.com/vanderschaarlab/decision-targeted-dt}. In summary, our contributions are:
\begin{enumerate}
    \item We prove that standard DT training can be theoretically suboptimal for decision-making when the DT hypothesis space is limited, establishing motivation for decision-targeted training ($\S$\ref{sec:misalignment}).
    \item We propose \DTtwo, an architecture-agnostic decision-targeted DT training paradigm that aligns simulations with proxy policy rankings ($\S$\ref{sec:method}). We use a smooth approximation of Kendall's rank correlation coefficient \cite{kendall1938new} ($\S$\ref{ssec:ranking-loss}) to enable supervision with pairwise preference targets derived from OPE policy value estimates ($\S$\ref{ssec:ope-proxies}).
    \item We empirically demonstrate the alignment gap predicted by our theory ($\S$\ref{ssec:restricted-hypothesis-space}), and show that this persists even with highly expressive hypothesis spaces ($\S$\ref{ssec:expressive-hypothesis-space}). Across a suite of continuous control tasks, \DTtwo reduces decision regret by more than $50\%$ at an MSE cost of only $17\%$, demonstrating capacity for better decision support. Furthermore, we show that this advantage extends to out-of-distribution settings, achieving better preference orderings amongst policies never seen during training ($\S$\ref{ssec:cancer-exp}).
\end{enumerate}

\section{Formalism}
\label{sec:formalism}

We now formalise the problem of decision support that we aim to address with DTs.

\subsection{The Decision-Making Goal}

Let a real-world system be modelled as a Markov Decision Process (MDP) denoted by $\mathcal{M} := (\mathcal{X}, \mathcal{A}, \Phi, r, \gamma)$. Here, $\mathcal{X} \subseteq \mathbb{R}^{d_{\mathcal{X}}}$ is a $d_{\mathcal{X}}$-dimensional state space, $\mathcal{A} \subseteq \mathbb{R}^{d_\mathcal{A}}$ is a $d_{\mathcal{A}}$-dimensional action space, $\Phi: \mathcal{X} \times \mathcal{A} \rightarrow \mathcal{P}(\mathcal{X})$ is the transition distribution, where $\mathcal{P}(\mathcal{X})$ denotes the set of probability measures over $\mathcal{X}$, $r: \mathcal{X} \times \mathcal{A} \times \mathcal{X}\rightarrow \mathbb{R}$ is a bounded reward function, and $\gamma \in [0, 1)$ is a discount factor. At time $t$, given the current state $\mathbf{x}_t \in \mathcal{X}$ and action $\mathbf{a}_t \in \mathcal{A}$, the next state is sampled according to $\mathbf{x}_{t+1} \sim \Phi(\cdot | \mathbf{x}_t, \mathbf{a}_t)$.

A policy $\pi: \mathcal{X} \rightarrow \mathcal{P}(\mathcal{A})$ can interact with $\mathcal{M}$ in an initial state $\mathbf{x}_0$, yielding a trajectory $\tau = (\mathbf{x}_0, \mathbf{a}_0, \mathbf{x}_1, \mathbf{a}_1, \ldots)$, where $\mathbf{a}_t \sim \pi(\mathbf{x}_t)$, and we denote such $\tau \sim (\Phi, \pi)$. We define the value of a policy $\pi$ in $\mathcal{M}$ as the expected sum of discounted rewards:
\begin{equation}
    J(\pi) := \mathbb{E}_{\tau \sim (\Phi, \pi)} \left[ \sum_{t=0}^{\infty} \gamma^t r(\mathbf{x}_t, \mathbf{a}_t, \mathbf{x}_{t+1}) \right].
\end{equation}
In decision-making processes, a user may want to interrogate trajectories induced by each $\pi$ in some candidate policy set $\Pi$, as well as establish a preference ordering over $\Pi$. This is defined by the relation $\succ$, where for any pair $\pi_i, \pi_j \in \Pi, \pi_i \succ \pi_j \iff J(\pi_i) > J(\pi_j)$. Alternatively, a reduced goal may be to identify the optimal policy $\pi^*$:
\begin{equation}
    \pi^* := \operatorname*{arg\,max}_{\pi \in \Pi} J(\pi).
    \label{eq:utopian_goal}
\end{equation}
Generating such trajectories and determining such preference orderings with repeated experimentation on the real-world system is generally infeasible, especially in high-stakes domains such as medicine and public policy.

\subsection{Digital Twins as Practical Surrogates}
\label{ssec:dt-as-practical-surrogate}
A DT can serve as a practical surrogate for the real system, involving a parametrised model $\hat{\Phi}_\theta$ that is designed to approximate $\Phi$. Given a policy $\pi$, the DT can simulate a trajectory from $\mathbf{x}_0$, denoted $\hat{\tau} = (\hat{\mathbf{x}}_0, {\mathbf{a}}_0, \hat{\mathbf{x}}_1, {\mathbf{a}}_1, ...) \sim (\hat{\Phi}_\theta, \pi)$. We denote the DT-estimated value of $\pi$ as:
\begin{equation}
    \label{eq:dt-value}
    \hat{J}(\pi; \theta) := \mathbb{E}_{\hat{\tau} \sim (\hat{\Phi}_\theta, \pi)} \left[ \sum_{t=0}^{\infty} \gamma^t r(\hat{\mathbf{x}}_t, {\mathbf{a}}_t, \hat {\mathbf{x}}_{t+1}) \right],
\end{equation}
which leads to a DT preference ordering defined by $\hat{\succ}$, where $\pi_i\ \hat{\succ}\ \pi_j \iff \hat{J}(\pi_i; \theta) > \hat{J}(\pi_j; \theta)$, and a DT-optimal policy $\hat{{\pi}}^*$:
\begin{equation}
    \hat{{\pi}}^* := \operatorname*{arg\,max}_{\pi \in \Pi} \hat{J}(\pi; \theta).
    \label{eq:surrogate_goal}
\end{equation}
\section{On the Misalignment of Conventional Digital Twin Training}
\label{sec:misalignment}
Conventional ML-based DT training paradigms involve a training dataset $\mathcal{D} = \{(\mathbf{x}, \mathbf{a}, \mathbf{x}')_i\}_{i=1}^n$ of $n$ observed transitions collected from the real-world system, or related systems. The optimal DT parameters, $\theta^*$, are typically found by minimising a measure of simulation error, $\mathcal{L}_\text{sim}(\theta)$, over $\mathcal{D}$. There are a few possible instantiations of $\mathcal{L}_\text{sim}(\theta)$, with the most notable being NLL for probabilistic models:
\begin{equation}
    \mathcal{L}_\text{sim}^{\text{NLL}}(\theta) := \mathbb{E}_{(\mathbf{x}, \mathbf{a}, \mathbf{x}') \sim \mathcal{D}} \left[ -\log \hat{\Phi}_\theta(\mathbf{x}' \mid \mathbf{x}, \mathbf{a}) \right],
    \label{eq:nll_opt}
\end{equation}
or MSE when focusing on the expected trajectory or assuming constant variance:
\begin{equation}
    \mathcal{L}_\text{sim}^{\text{MSE}}(\theta) := \mathbb{E}_{(\mathbf{x}, \mathbf{a}, \mathbf{x}') \sim \mathcal{D}} \left[ \|\mathbf{x}' - \hat{\mu}_\theta(\mathbf{x}, \mathbf{a}) \|^2 \right],
    \label{eq:mse_opt}
\end{equation}
where $\hat{\mu}_\theta(\mathbf{x}, \mathbf{a}) = \mathbb{E}_{\mathbf{x}' \sim \hat{\Phi}_\theta(\cdot \mid \mathbf{x}, \mathbf{a})}[\mathbf{x}']$.
Such loss functions are a poor proxy for the ultimate goal of establishing preference orderings over policies.
\begin{theorem}
\label{thm:suboptimality}
Let $\Phi$ be the transition distribution of a real-world system, $\mathcal{D} = \{(\mathbf{x}, \mathbf{a}, \mathbf{x}')_i\}_{i=1}^n$ be a training dataset collected from $\Phi$, and $\mathcal{F} = \{\hat \Phi_\theta \mid \theta \in \Theta\}$ be a DT hypothesis class such that $\Phi \notin \mathcal{F}$ and
\begin{equation}
    \min_{\theta \in \Theta} \mathbb{E}_{(\mathbf{x},\mathbf{a}) \sim \mathcal{D}} \left[ D_{\mathrm{KL}}(\Phi(\cdot|\mathbf{x},\mathbf{a}) \| \hat{\Phi}_\theta(\cdot|\mathbf{x},\mathbf{a})) \right] > 0.
\end{equation}
Let $\hat \Phi_{\theta^{*}}$ be the $\mathcal{L}_\text{sim}$-optimal DT, i.e.
\begin{equation}
\theta^{*} \in \operatorname*{arg\,min}_{\theta \in \Theta} \mathcal{L}_{\text{sim}}(\theta).
\end{equation}
Then, there exists a reward function $r$ and a set of policies $\Pi$ such that $\hat{\Phi}_{\theta^*}$ induces a suboptimal preference ordering on $\Pi$ according to $r$.
\end{theorem}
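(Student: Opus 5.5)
The strategy is to exploit the fact that $\hat\Phi_{\theta^*}$ differs from $\Phi$ somewhere on the data support, and then to build a reward function and two one-step policies whose ranking hinges on exactly that discrepancy. The positive-KL hypothesis ensures such a discrepancy exists. Since $\mathbb{E}_{(\mathbf{x},\mathbf{a})\sim\mathcal{D}}[D_{\mathrm{KL}}(\Phi\|\hat\Phi_{\theta^*})]>0$, there is a state--action pair $(\mathbf{x}_0,\mathbf{a}_1)$ in the support of $\mathcal{D}$ with $\Phi(\cdot|\mathbf{x}_0,\mathbf{a}_1)\neq\hat\Phi_{\theta^*}(\cdot|\mathbf{x}_0,\mathbf{a}_1)$. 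For the MSE loss we may instead need the means to differ. If the means agree everywhere, we fall back on a nonlinear reward that separates the distributions. Two distinct probability measures are separated by some bounded measurable function, namely the indicator of a set $B$ where they differ. So there is a bounded $g$ with
\[
\mathbb{E}_{\Phi(\cdot|\mathbf{x}_0,\mathbf{a}_1)}[g]\neq\mathbb{E}_{\hat\Phi_{\theta^*}(\cdot|\mathbf{x}_0,\mathbf{a}_1)}[g].
\]
Replacing $g$ by $-g$ if needed, we may assume the true expectation is the larger one, $\mathbb{E}_\Phi[g] = \mathbb{E}_{\hat\Phi}[g]+\delta$ with $\delta>0$.

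Next comes the construction of $\Pi$ and $r$. Let $\pi_1$ play $\mathbf{a}_1$ deterministically and $\pi_2$ play some other fixed action $\mathbf{a}_2$. Consider episodes started at $\mathbf{x}_0$, or let the initial-state distribution be a point mass there, which we are free to choose since the statement quantifies over $r$ and $\Pi$. Take $\gamma$-discounting but make the reward nonzero only on the first transition. Concretely, set $r(\mathbf{x},\mathbf{a},\mathbf{x}')=g(\mathbf{x}')\mathbf{1}[\mathbf{a}=\mathbf{a}_1]+c\,\mathbf{1}[\mathbf{a}=\mathbf{a}_2]$ for $\mathbf{x}=\mathbf{x}_0$, and $0$ otherwise. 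If $\mathbf{x}_0$ has positive probability of being revisited, we can still kill later rewards by making them depend on a time index in the state. A simpler route is to use $\gamma$ and define $\pi_1,\pi_2$ so that later rewards vanish.

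Then $J(\pi_1)=\mathbb{E}_\Phi[g]$ and $\hat J(\pi_1;\theta^*)=\mathbb{E}_{\hat\Phi}[g]$, while $J(\pi_2)=\hat J(\pi_2;\theta^*)=c$. Choose $c$ strictly between $\mathbb{E}_{\hat\Phi}[g]$ and $\mathbb{E}_\Phi[g]$. This gives $\pi_1\succ\pi_2$ under the true dynamics but $\pi_2\,\hat\succ\,\pi_1$ under the twin. So the DT reverses the true ordering and selects a suboptimal policy with regret at least $\mathbb{E}_\Phi[g]-c>0$.

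The main obstacle is making the reward depend only on the discrepancy at $(\mathbf{x}_0,\mathbf{a}_1)$ without contamination from subsequent dynamics or revisits. The cleanest remedy is to allow $r$ to be nonzero only at $t=0$. This is achieved either by augmenting the state with a time index, or by choosing $\mathbf{x}_0$ such that $r$ vanishes off it and bounding the contribution of later visits by $\gamma$-scaled terms that can be made smaller than $\delta/2$ by scaling. A second subtlety is that under the MSE objective $\hat\Phi_{\theta^*}$ may be specified only through its mean. In that case the inequality must come from comparing a $\hat\Phi_{\theta^*}$ that is fully specified as a distribution. The KL condition then guarantees a separating $g$, even if $g$ must be chosen nonlinear rather than the identity.
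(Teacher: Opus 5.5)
Your proposal is correct and follows essentially the same route as the paper's proof. You extract a pair $(\mathbf{x}_0,\mathbf{a}_1)\in\mathrm{supp}(\mathcal{D})$ and a set on which $\Phi$ and $\hat\Phi_{\theta^*}$ assign different mass from the positive expected KL, then build a one-step decision problem: an indicator reward for that action, a constant $c$ strictly between the two probabilities for a second action, and the induced ordering flips. The paper handles your ``contamination'' concern simply by positing horizon $1$ (equivalently $r_t=0$ for $t>0$), so your primary remedy matches it; your fallback of making later-visit contributions small ``by scaling'' would not work as stated, since rescaling $r$ rescales $\delta$ by the same factor.
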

\begin{proof}
See Appendix~\ref{app:suboptimality-proof}.
\end{proof}
From Theorem~\ref{thm:suboptimality} we can see that, when a DT cannot exactly model its target system, optimising $\mathcal{L}_\text{sim}$ will not generally prioritise decision-making. Given that collecting large data for certain real-world systems can be difficult, and that DTs should constantly update alongside their counterpart system \cite{amad2025continuously}, using lightweight model classes, such as equation-based DTs, or small neural DTs, is common, and here Theorem~\ref{thm:suboptimality} can directly apply. Furthermore, even when expressive model classes are used, the complex dynamics or partial observability of real-world systems may still result in non-covering hypothesis spaces. Moreover, under a further assumption on $\mathcal{F}$, we can see that better models than the $\mathcal{L}_\text{sim}$-optimal DT exist for decision-making.
\begin{figure}[t]
    \centering
    \begin{tikzpicture}[scale=0.75]
    \tikzset{X/.style={cross out, draw, thick, minimum size=4pt, inner sep=0pt, outer sep=0pt}}

    \def\blobpath{
      (-3.8, 0.5) 
      .. controls (-4.0, 2.5) and (-2.5, 3.5) .. (-1.0, 2.8) 
      .. controls (0.5, 2.2) and (1.0, 4.0) .. (2.5, 3.2)   
      .. controls (4.0, 2.5) and (2.2, 0.5) .. (3.1, -0.8)  
      .. controls (3.5, -2.5) and (1.5, -1.5) .. (1.4, -1.0) 
      .. controls (1.0, 0.5) and (-2.5, 1.5) .. (-2.8, 0) 
      .. controls (-3.0, -0.5) and (-3.5, -0.5) .. cycle
    }

    \begin{scope}
        \clip \blobpath;
        
        \fill[gray!20] (-5,-5) rectangle (5,5);
        
        \shade[inner color=seafoam, outer color=gray!20] (-2.8, 0.2) circle (1.8);
        
        \shade[inner color=seafoam, outer color=gray!20] (0, 1.8) circle (1.5);

        \shade[inner color=seafoam, outer color=gray!20] (2.2, -0.8) circle (1.9);
        
    \end{scope}

    \draw[thick, black!80] \blobpath;
    
    \node[font=\Large] at (-2.5, 2.5) {$\mathcal{F}$};

    \filldraw[fill=white, draw=black, line width=1pt] (3.8, 1.8) circle (2.5pt);
    \node[right, font=\Large] at (3.83, 1.9) {$\Phi$};
    
    \filldraw[fill=white, draw=black, line width=1pt] (2.95, 2) circle (2.5pt);
    \node[left, font=\Large, align=right] at (2.9, 2) {$\hat \Phi_{\theta^*}$};
    
    \filldraw[fill=white, draw=black, line width=1pt] (2.5, -0.5) circle (2.5pt);
    \node[below, font=\Large] at (2.5, -0.6) {$\hat \Phi_{\theta'}$};
    \end{tikzpicture}
    \vspace{-20pt}
    \caption{When $\Phi \notin \mathcal{F}$, the $\mathcal{L}_\text{sim}$-optimal DT, $\hat \Phi_{\theta^*}$, may land in a region of low decision quality for some candidate policy set $\Pi$ and reward function $r$ (grey areas). With decision-targeted DT training, we wish to find $\hat \Phi_{\theta'}$ that resides in a better decision region (green areas), at a small simulation fidelity cost.}
    \vspace{-10pt}
    \label{fig:hypothesis-class}
\end{figure}
\begin{definition}
\label{def:local-improvement}
    Let $(\bar{\mathbf{x}},\bar{\mathbf{a}})\in \mathrm{supp}(\mathcal D)$ and $G\subseteq\mathcal X$ be such that $\Phi(G \mid \bar{\mathbf{x}},\bar{\mathbf{a}}) \neq \hat{\Phi}_{\theta^*}(G \mid \bar{\mathbf{x}},\bar{\mathbf{a}})$. Define
    \begin{equation}
            p := \Phi(G \mid \bar{\mathbf{x}},\bar{\mathbf{a}}),\ \hat p := \hat{\Phi}_{\theta^*}(G \mid \bar{\mathbf{x}},\bar{\mathbf{a}}).
    \end{equation}
    Assume, w.l.o.g., that $p>\hat p$. Then, $\mathcal F$ allows  \emph{local improvement at $\theta^*$} if there exists
    $\theta'\in\Theta$ such that
    \begin{equation}
    \label{eq:directional_improve}
    \hat p' := \hat{\Phi}_{\theta'}(G \mid \bar{\mathbf{x}},\bar{\mathbf{a}}) \;>\; \hat p.
    \end{equation}
\end{definition}
This definition applies to $\mathcal{F}$ when it contains an alternative to the $\mathcal{L}_\text{sim}$-optimal model whose predictions on a local subset of transitions are closer to the true transition distribution.
\begin{theorem}
\label{thm:better_model_exists}
Assume the conditions of Theorem~\ref{thm:suboptimality} and that $\mathcal{F}$ allows local improvement at $\theta^*$. Then there exists a reward function $r$, a set of policies $\Pi$, and $\theta' \in \Theta \setminus \{\theta^*\}$ such that $\hat{\Phi}_{\theta'}$ induces a strictly better preference ordering on $\Pi$ according to $r$ than $\hat{\Phi}_{\theta^*}$.
\end{theorem}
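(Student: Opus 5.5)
We construct $r$ and $\Pi$ around the state–action pair $(\bar{\mathbf{x}},\bar{\mathbf{a}})$ and the set $G$ from Definition~\ref{def:local-improvement}. Since $p > \hat p$, the $\mathcal{L}_\text{sim}$-optimal model underestimates the probability of landing in $G$ from $(\bar{\mathbf{x}},\bar{\mathbf{a}})$, while $\hat\Phi_{\theta'}$ places strictly more mass there. The plan is to choose a reward and two policies whose true ranking depends only on whether $\Phi(G\mid\bar{\mathbf{x}},\bar{\mathbf{a}})$ exceeds some threshold $c$. We place $c$ in the interval between $\hat p$ and $\min(p,\hat p')$, so that $\hat\Phi_{\theta'}$ ranks correctly while $\hat\Phi_{\theta^*}$ does not.

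Concretely, take the indicator reward $r(\mathbf{x},\mathbf{a},\mathbf{x}') = \mathbb{1}[\mathbf{x}=\bar{\mathbf{x}},\,\mathbf{x}'\in G]$, which is bounded. If $\{\bar{\mathbf{x}}\}$ has measure zero, use a reward depending only on $\mathbf{x}'$ and start rollouts deterministically at $\mathbf{x}_0=\bar{\mathbf{x}}$. We also take $\gamma=0$, or equivalently let the reward count only the first transition. This makes $J(\pi)=\Phi(G\mid\bar{\mathbf{x}},\pi(\bar{\mathbf{x}}))$, and the DT value is the analogous quantity under $\hat\Phi_\theta$. The candidate set is $\Pi=\{\pi_1,\pi_2\}$, where:
\begin{itemize}
\item $\pi_1$ deterministically plays $\bar{\mathbf{a}}$ at $\bar{\mathbf{x}}$, giving true value $p$ and DT values $\hat p$ and $\hat p'$ under the two models;
\item $\pi_2$ is a reference policy with value exactly $c$ under every model.
\end{itemize}
To obtain a model-independent constant, I would augment the reward with an action-dependent term: use a second action $\mathbf{a}^\dagger\neq\bar{\mathbf{a}}$, let $r$ pay a constant $c$ whenever $\mathbf{a}=\mathbf{a}^\dagger$ at $\bar{\mathbf{x}}$, and let $\pi_2$ play $\mathbf{a}^\dagger$. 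The reward then restricts to $r=\mathbb{1}[\mathbf{a}=\bar{\mathbf{a}},\mathbf{x}'\in G]+c\,\mathbb{1}[\mathbf{a}=\mathbf{a}^\dagger]$ at $\bar{\mathbf{x}}$, and all three models assign $\pi_2$ value $c$.

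Choose $c\in(\hat p,\min(p,\hat p'))$, which is a nonempty interval because $p>\hat p$ and $\hat p'>\hat p$. This gives:
\begin{itemize}
\item the truth: $\pi_1\succ\pi_2$, since $p>c$;
\item $\hat\Phi_{\theta'}$: $\pi_1\,\hat\succ\,\pi_2$, since $\hat p'>c$;
\item $\hat\Phi_{\theta^*}$: $\pi_2\,\hat\succ\,\pi_1$, since $\hat p<c$.
\end{itemize}
Thus $\hat\Phi_{\theta'}$ recovers the true ordering, giving pairwise ranking accuracy and zero selection regret, while $\hat\Phi_{\theta^*}$ inverts the pair and incurs regret $p-c>0$. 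This makes $\hat\Phi_{\theta'}$ strictly better. We also get $\theta'\neq\theta^*$ automatically, since $\hat p'\neq\hat p$.

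The main obstacle is ensuring that the constructed $r$ and $\Pi$ are legitimate within the formalism:
\begin{itemize}
\item The action space must contain a second action $\mathbf{a}^\dagger$. If $\mathcal{A}$ is a singleton, I would instead realise the constant $c$ through a state-dependent reward at an auxiliary initial state, or through a stochastic policy mixture.
\item Measurability issues must be handled by stating the construction with $\mathbf{x}_0=\bar{\mathbf{x}}$ fixed and $\gamma=0$, and by reading the preference ordering quality as agreement with $\succ$ on $\Pi$, consistent with how Theorem~\ref{thm:suboptimality} is used.
\end{itemize}
I would first try the $\gamma=0$ single-step version, noting that the formalism allows $\gamma\in[0,1)$, since it avoids all multi-step compounding.
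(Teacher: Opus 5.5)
Your proposal is correct and is essentially the paper's own proof. It uses the same one-step task at $(\bar{\mathbf{x}},\bar{\mathbf{a}})$ with reward $\mathbb{I}\{\mathbf{a}=\bar{\mathbf{a}}\}\mathbb{I}\{\mathbf{x}'\in G\}+c\,\mathbb{I}\{\mathbf{a}=\mathbf{a}^\dagger\}$, the same two deterministic policies, and the same choice $c\in(\hat p,\min\{p,\hat p'\})$; your $\gamma=0$ device plays the role of the paper's horizon-one restriction.

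One caveat: like the paper, you genuinely need $|\mathcal{A}|\ge 2$. Your suggested fallbacks for a singleton action space cannot work, because then every policy coincides and no two models can induce different orderings on $\Pi$.
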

\begin{proof}
See Appendix~\ref{app:better-model-proof}.
\end{proof}
Theorem \ref{thm:better_model_exists} outlines our theoretical motivation to search for a better training paradigm for DTs than optimising $\mathcal{L}_\text{sim}$, that will better converge to a model that induces a good preference ordering on $\Pi$ according to $r$ (Figure~\ref{fig:hypothesis-class}). We elaborate on the scenarios when these theorems directly apply, and empirically validate them, in $\S$\ref{ssec:restricted-hypothesis-space}. Moreover, we hypothesise that, even when $\Phi \in \mathcal{F}$, optimising $\mathcal{L}_\text{sim}$ will poorly converge to an optimal decision-making model, and we show empirical evidence for this in $\S$\ref{ssec:expressive-hypothesis-space} and $\S$\ref{ssec:cancer-exp}.

\section{\DTtwo: Decision-Targeted Digital Twin Training}
\label{sec:method}
To train a decision-targeted DT we must address some practical hurdles. First, comparing a ground-truth preference ordering $\succ$ to a DT-estimated $\hat \succ$ with standard ranking metrics involves indicator functions, e.g. checking $\mathbb{I}(\pi_i \succ \pi_j) = \mathbb{I}(\pi_i\ \hat \succ\ \pi_j)\ \forall \pi_i, \pi_j\in \Pi$, which are difficult to optimise, because they have zero gradient almost everywhere. We address this by using a smoothed ranking loss for decision-targeted training ($\S$\ref{ssec:ranking-loss}). Second, the ground-truth $\succ$ is almost never available, as this is, in part, the reason for constructing a DT in the first place ($\S$\ref{ssec:dt-as-practical-surrogate}). So, we derive proxy preference orderings using OPE policy value estimates to target during training ($\S$\ref{ssec:ope-proxies}). Finally, to avoid the need to generate and backpropagate through long DT rollouts when optimising $\hat \succ$ during training, we employ a value-bootstrapped scheme ($\S$\ref{ssec:dt-bootstrapping}). This forms \DTtwo, our method for DT training that focuses on modelling dynamics that are critical for decision support.

\subsection{Smoothed Ranking Loss}
\label{ssec:ranking-loss}
We define a ranking objective based on a smooth approximation of Kendall's rank correlation coefficient \cite{kendall1938new}. Let $\Delta_{ij} := J(\pi_i) - J(\pi_j)$ be the ground-truth value difference between two policies, and $\hat \Delta_{ij}(\theta) := \hat{J}(\pi_i; \theta) - \hat{J}(\pi_j; \theta)$ be the corresponding DT-estimated value difference. Kendall's correlation measures the concordance of signs between these differences:
\begin{equation}
    \frac{1}{|\mathcal{C}|} \sum_{(i,j) \in \mathcal{C}} \text{sign}\left(\Delta_{ij}\right) \cdot \text{sign}\left(\hat{\Delta}_{ij}(\theta)\right),
    \label{eq:hard_kendall}
\end{equation}
where $\mathcal{C}$ is the set of pairs in a candidate policy set $\Pi$. To make this suitable for gradient-based optimisation, we replace the sign functions with hyperbolic tangents, and set the complement as our ranking loss to be minimised:
\begin{equation}
    \mathcal{L}_{\text{rank}}(\theta) := 1 - \frac{1}{|\mathcal{C}|} \sum_{(i,j) \in \mathcal{C}} \tanh\left(\frac{\Delta_{ij}}{\alpha}\right) \cdot \tanh\left(\frac{\hat{\Delta}_{ij}(\theta)}{\alpha}\right)
    \label{eq:soft_kendall}
\end{equation}
where the temperature $\alpha$ governs the trade-off between faithfulness to the discrete Kendall correlation and gradient smoothness. Smaller $\alpha$ produces sharper, more sign-like curves, while larger $\alpha$ yields smoother gradients that can lead to more stable optimisation.

A key benefit of this formulation is that the $\tanh(\Delta_{ij}/\alpha)$ term acts as an implicit weighting mechanism. When the reference evaluator is indifferent between two policies ($\Delta_{ij} \approx 0$), the term $\tanh(\Delta_{ij}/\alpha)$ approaches zero, causing the gradient contribution for that pair to vanish. Consequently, the model is encouraged to ignore noise in policy separation, allocating capacity toward distinguishing between high-regret pairs. We empirically compare this formulation to alternative ranking losses in Appendix \ref{app:ranking-ablation}.

\subsection{Off-Policy Evaluation for Proxy Preference Targets}
\label{ssec:ope-proxies}

Since ground-truth policy values $J(\pi)$ are generally inaccessible in the offline setting we operate in, we derive proxy $\Delta_{ij}$ targets using principles from OPE ($\S$\ref{ssec:OPE-rw}). Specifically, we use fitted Q-evaluation (FQE) \citep{le2019batch} to estimate the action-value function $Q^\pi(\mathbf{x}, \mathbf{a})$ for each $\pi \in \Pi$. $Q^\pi(\mathbf{x}, \mathbf{a})$ denotes the expected long-term return of executing action $\mathbf{a}$ in state $\mathbf{x}$ and following $\pi$ thereafter: 
\begin{equation}
    Q^\pi(\mathbf{x}, \mathbf{a}) := \mathbb{E}_{\tau \sim (\Phi, \pi)} \left[ \sum_{t=0}^{\infty} \gamma^t r_t \Big| \mathbf{x}_0=\mathbf{x}, \mathbf{a}_0=\mathbf{a} \right].
\end{equation}
FQE approximates this with a parameterised function $Q^\pi_\psi$ trained on $\mathcal{D}$ to minimise the expected Bellman error:
\begin{equation}
    \mathcal{L}_\text{FQE}(\psi) := \mathbb{E}_{\mathcal{D}} \left[ \left( Q^\pi_\psi(\mathbf{x}, \mathbf{a}) - \left(r + \gamma Q^\pi_{\bar{\psi}}(\mathbf{x}', \pi(\mathbf{x}'))\right) \right)^2 \right]
\end{equation}
where $Q^\pi_{\bar{\psi}}$ is a target network. The parameters $\bar{\psi}$ are frozen copies of $\psi$ that are updated periodically to stabilise the bootstrapping target. Once converged, the scalar value of $\pi$ can be estimated by averaging $Q^\pi_{\psi}$ over the initial states, 
\begin{equation}
    J_{\text{FQE}}(\pi) \approx \mathbb{E}_{\mathbf{x}_0 \sim \mathcal{D},\ \mathbf{a}\sim\pi(\mathbf{x}_0)}[Q^\pi_\psi(\mathbf{x}_0, \mathbf{a})].
\end{equation}
These estimates allow us to construct the proxy pairwise rankings targets $\Delta_{ij}^\text{FQE} =J_{\text{FQE}}(\pi_i) - J_{\text{FQE}}(\pi_j)$ that are used in place of $\Delta_{ij}$ in Eq. \ref{eq:soft_kendall}. 

We select FQE amongst the plethora of OPE methods given its simplicity and strong performance on OPE benchmarks \cite{fu2021benchmarks, voloshin1empirical}. Furthermore, while FQE can be prone to systematic bias, particularly for out-of-distribution policies, these errors will tend to cancel out during pairwise comparisons. This makes FQE-based rankings potentially more robust than its scalar value estimates, as relative orderings can be preserved even when absolute value magnitudes are biased.

\subsection{Bootstrapping Digital Twin Value Estimates}
\label{ssec:dt-bootstrapping}

Computing the DT-estimated $\hat{\Delta}_{ij}(\theta)$ for Eq.~\ref{eq:soft_kendall} requires simulating trajectories $\hat{\tau} \sim (\hat \Phi_\theta, \pi)$ (Eq. \ref{eq:dt-value}). However, backpropagating through long rollouts to optimise $\mathcal{L}_{\text{rank}}$ is computationally expensive and prone to vanishing or exploding gradients \cite{bengio1994learning, pascanu2013difficulty}. To mitigate this, we employ a bootstrapping scheme that truncates DT rollouts at a fixed horizon $H$ and approximates the remaining return by re-using a frozen $Q^\pi_\psi$ trained with FQE. The bootstrapped DT-value estimate for policy $\pi$ is defined as:
\begin{equation}
    \hat{J}_H(\pi; \theta) := \mathbb{E}_{\hat{\tau} \sim (\hat{\Phi}_\theta, \pi)} \left[ \sum_{t=0}^{H-1} \gamma^t r_t + \gamma^H Q^\pi_\psi(\hat{\mathbf{x}}_H, {\mathbf{a}}_H) \right].
    \label{eq:dt-bootstrap}
\end{equation}
By using $\hat{J}_H$ to compute $\hat{\Delta}_{ij}(\theta)$, we reduce the cost of calculating $\mathcal{L}_{\text{rank}}$, and improve its gradient stability.

To ensure \DTtwo still permits generation of high fidelity trajectories, which is important for the interpretability aspects of human-in-the-loop decision support, the ultimate training objective combines $\mathcal{L}_{\text{rank}}$ with $\mathcal{L}_{\text{sim}}$:
\begin{equation}\label{eq:dt2-loss}
    \mathcal{L}_{\text{DT}^2}(\theta) := (1-\lambda)\mathcal{L}_{\text{sim}}(\theta) + \lambda \mathcal{L}_{\text{rank}}(\theta).
\end{equation}
The hyperparameter $\lambda \in [0, 1]$ modulates the trade-off between optimising for raw simulation fidelity and decision-targeting. $\lambda \to 0$ recovers standard DT training, while $\lambda \to 1$ yields a purely decision-targeted model. We investigate its effects in $\S$\ref{ssec:lambda_effect}. By shaping the DT loss landscape with $\Delta_{ij}^\text{FQE}$, \DTtwo distils the decision-making power of `black-box' OPE methods into the interpretable structure of a DT.

\section{Related Works}
\label{sec:related_works}
\subsection{Digital Twins}

1960s aerospace engineering pioneered the use of DTs \citep{allen2021digital}, where physical laws can fully describe system dynamics, but manually-defining such DTs requires significant domain expertise. Methods like SINDy \citep{brunton2016discovering} and PySR \citep{cranmer2023interpretable} aim to automate the discovery of such mechanistic models, but they are often outperformed by more expressive deep learning approaches.

Deep learning DTs approximate system dynamics from longitudinal data using a variety of model architectures, including standard MLP approaches \citep{san2023decentralized, holt2024automatically}, as well as tailored sequential models like RNNs \citep{elman1990finding, canzaniello2024leveraging}, transformers \citep{vaswani2017attention}, and pre-trained \citep{amad2025continuously} or fine-tuned large language models \citep{makarov2024large}. Neural ODEs \citep{chen2018neural, dupont2019augmented, alvarez2020dynode} further extend deep learning capabilities to modelling continuous-time dynamics. Hybrid approaches \citep{raissi2019physics, qian2021integrating, takeishi2021physics, kuang2024med} combine deep learning with mechanistic components, to improve sample efficiency and generalisation. Most existing DT frameworks seek to minimise transition errors across a training set, and, to our knowledge, no previous works optimise explicitly for decision-making over candidate policies.

\subsection{Model-Based Reinforcement Learning}
A DT is similar to a world model in model-based RL (MBRL) \citep{sutton1991dyna}, which is used to learn new policies. Some MBRL research has a similar motivation to ours, recognising that global model accuracy is unnecessary, and that optimising for this can be suboptimal for policy learning \citep{lambert2020objective, grimm2020value}. Proposed solutions include policy-aware methods \citep{eysenbach2022mismatched, wang2023live, ma2023learning} that favour areas of the state-action space that are traversed by the current policy, and value-aware methods \citep{farahmand2017value, farahmand2018iterative, grimm2020value, farquhar2021self, voelcker2022value}, which are more similar to \DTtwo, focusing model capacity in areas important to determine the current policy value. Nevertheless, these works operate in the online setting, where the true environment can be queried during training, whereas we focus on offline DT construction.

There are also similar works in offline MBRL which forego global model accuracy, typically to encode so-called `pessimistic' dynamics to discourage the learned policy from traversing parts of the state-action space not covered by the offline dataset \citep{kidambi2020morel, yu2020mopo, qiao2026model}. While useful for policy learning, such alterations to the learned model do not improve policy ranking abilities to the same extent as \DTtwo ($\S$\ref{ssec:expressive-hypothesis-space}). Furthermore, since the models in MBRL are a means to learning a performant policy, maintaining simulation fidelity, for interpretability purposes, is not a priority, whereas this is essential for human-in-the-loop decision support with DTs.

\subsection{Decision-Focused Learning}
Our work is conceptually related to the field of decision-focused learning (DFL), which studies training predictive models whose outputs are used in downstream optimisation processes, with the objective of directly improving decision quality. Areas such as stochastic programming \citep{donti2017task} and combinatorial optimisation \citep{wilder2019melding} have received attention in DFL, and extensions to learning models in MDPs have also been proposed \citep{futoma2020popcorn, wang2021learning, nikishin2022control, sharma2024decision}. Generally, DFL requires some way to differentiate through the downstream optimisation process, which can limit the complexity of the models or policies considered in these works. Our method has minimal restrictions for the policies that can be considered in $\Pi$, and the DT model class, and we demonstrate good performance in a variety of complex, high-dimensional environments ($\S$\ref{ssec:expressive-hypothesis-space}). Furthermore, these works are largely designed to learn a model such that the optimal policy within it has high value, whereas we uniquely aim to learn a model such that policies from a candidate set are well ranked by it.

\subsection{Off-Policy Evaluation}
\label{ssec:OPE-rw}
OPE methods are used to estimate $J(\pi_\text{target})$ from a dataset collected from some behavioural policy $\pi_\text{behavioural}$. Popular approaches include direct methods \citep{lagoudakis2003least, ernst2005tree, munos2008finite, le2019batch}, which learn and use $Q^{\pi_\text{target}}$ to estimate $J(\pi_\text{target})$, importance sampling methods \citep{precup2000eligibility, hallak2017consistent, liu2018breaking, xie2019towards, nachum2019dualdice, uehara2020minimax, zhang2020gendice, zhang2020gradientdice}, which use importance weights to correct for the difference in densities between $\pi_\text{behavioural}$ and $\pi_\text{target}$ to estimate $J(\pi_\text{target})$, and doubly robust methods \citep{jiang2016doubly, thomas2016data}, which combine direct and importance sampling methods.

While OPE methods can be directly used for decision support, a primary limitation is their lack of interpretability and the difficulty in verifying their value estimates. DTs offer several advantages over such methods because they generate full simulations of each policy under consideration. Inspecting these simulations allows for the consideration of subjective criteria in decision-making, that may not be perfectly captured by a formally defined reward function. Furthermore, verification of DT value estimates is easier than for OPE methods, as if a simulation violates known constraints a user can choose to discard it.

\section{Empirical Investigation}
\label{sec:experiments}
We now empirically validate the efficacy of \DTtwo. We cover the limited model capacity setting that relates to Theorems \ref{thm:suboptimality} and \ref{thm:better_model_exists} in $\S$\ref{ssec:restricted-hypothesis-space}, before showing that their takeaways hold with expressive models as well in $\S$\ref{ssec:expressive-hypothesis-space} and $\S$\ref{ssec:cancer-exp}. Finally, we investigate the effects of $\lambda$ in $\mathcal{L}_{\text{DT}^2}$ in $\S$\ref{ssec:lambda_effect}. We provide detailed experimental set-ups in Appendix \ref{app:experimental-details}.

\subsection{Restricted Hypothesis Spaces}
\label{ssec:restricted-hypothesis-space}
To empirically demonstrate Theorems~\ref{thm:suboptimality} and \ref{thm:better_model_exists}, we construct three toy scenarios which satisfy the required assumptions, where $\Phi \notin \mathcal{F}$ and $\mathcal{F}$ allows local improvement at $\theta^*$. N.B. in the following, we assume access to ground-truth policy values during \DTtwo training, for simplicity. In the more realistic settings, from $\S$\ref{ssec:expressive-hypothesis-space} onward, we demonstrate the full \DTtwo pipeline, with FQE-derived proxy rankings.

\subsubsection{Limited Neural Capacity}
Consider a two-dimensional system $x_t = [x_{d,t}, x_{c,t}]^\top$ that decomposes into independent `decoy' and `critical' components, with an associated scalar action $a_t$. $x_{t}$ evolves according to $x_{d,t+1} = M \sin(x_{d,t})$ and $x_{c,t+1} = x_{c,t} + \epsilon a_t$, and we set $M$ and $\epsilon$ such that $\text{Var}(x_{d,t}) >> \text{Var}(x_{c,t})$. We set $\mathcal{F}$ as the family of three layer MLPs with a single-neuron bottleneck as the middle layer. Forcing an internal one-dimensional representation prevents any model from simultaneously representing the independent evolution of $x_{d,t}$ and $x_{c,t}$, guaranteeing $\Phi \notin \mathcal{F}$. 

We consider $\Pi$ with two constant-action policies, $\pi_{\text{high}} =1.0$ and $\pi_{\text{low}}=0.9$, and a reward  $r(\mathbf{x}_t, a_t, \mathbf{x}_{t+1}) = x_{c,t}$ such that $\pi_\text{high} \succ \pi_\text{low}$. Across $20$ seeds, we generate $\mathcal{D}$ by drawing $a_t \sim U[-1,1]$, and train DTs with $\mathcal{L}_\text{sim}^\text{MSE}$ and $\mathcal{L}_\text{\DTtwo}$. Because $x_{d,t}$ dominates the variance in $\mathcal{D}$, the $\mathcal{L}_\text{sim}^\text{MSE}$-trained DT allocates it significant capacity, and consequently achieves only a $20\%$ success rate across seeds in identifying $\pi_{\text{high}}$ as the optimal policy via model-estimated policy values. In contrast, $\mathcal{L}_\text{\DTtwo}$ encourages the bottlenecked model to identify that $x_c$ drives value differences, despite its small magnitude, resulting in $100\%$ ranking accuracy.

\subsubsection{Misspecified Mechanistic Model}
We now consider a mechanistic setting, where the assumed functional form of the dynamics is incorrect. The true one-dimensional system has cubic action-dependent dynamics, $x_{t+1} = x_t + a_t - a_t^3 + \xi$, where $\xi \sim \mathcal{N}(0, 0.1)$, and we set $\mathcal{F}$ as the family of linear models, $x_{t+1} = x_t + \beta a_t$, such that $\Phi \notin \mathcal{F}$. We collect $\mathcal{D}$ by drawing $a_t \sim U[-1.5, 1.5]$. 

We consider $\Pi$ containing three constant-action policies $\pi_1 = -0.58,\ \pi_2 =0,\ \pi_3 = +0.58$, and a reward $r(x_t, a_t, x_{t+1}) = x_t$ such that $\pi_3 \succ \pi_2 \succ \pi_1$. We show the mechanistic models learnt by optimising $\mathcal{L}_\text{sim}^\text{MSE}$ and $\mathcal{L}_\text{\DTtwo}$, and the true dynamics in Figure \ref{fig:mechanistic-dts}. For the action range in $\mathcal{D}$, the cubic term $-a_t^3$ tends to dominate, creating a broadly negative correlation between action and state change $\delta = x_{t+1} - x_{t}$, and the $\mathcal{L}_\text{sim}^\text{MSE}$-trained DT learns this negative slope, consequently ordering $\pi_1 \hat \succ \pi_2 \hat \succ \pi_3$. \DTtwo, on the other hand, recognises that, in the local decision-critical region for $\Pi$, the linear term dominates, sacrificing global fit to learn a positive slope, recovering the true preference ordering.
\begin{figure}[ht]
    \centering
    \includegraphics[width=0.7\linewidth]{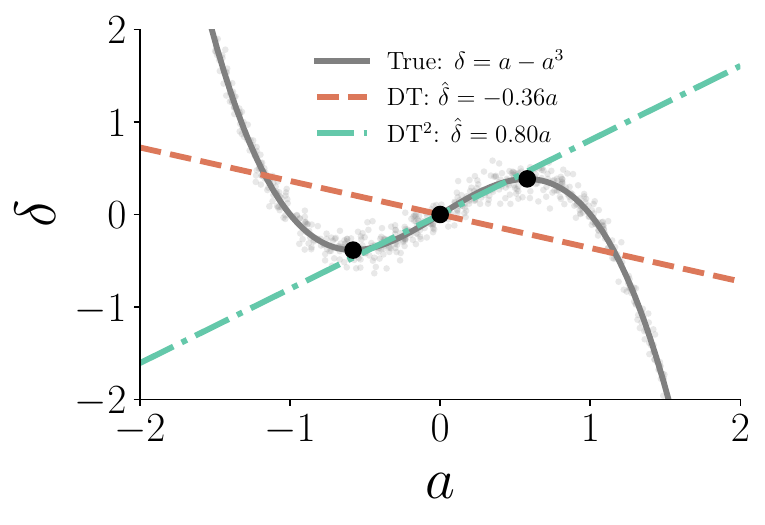}
    \caption{State delta across action values, as determined by mechanistic models learnt via $\mathcal{L}_\text{sim}^\text{MSE}$ and $\mathcal{L}_\text{\DTtwo}$, compared to the true $\Phi$. Black dots represent constant-action policies in $\Pi$.}
    \label{fig:mechanistic-dts}
\end{figure}
\subsubsection{Partially Observed Dynamics}
\begin{figure*}[th]
    \centering
    \begin{minipage}{0.32\linewidth}
        \centering
        \includegraphics[width=\linewidth]{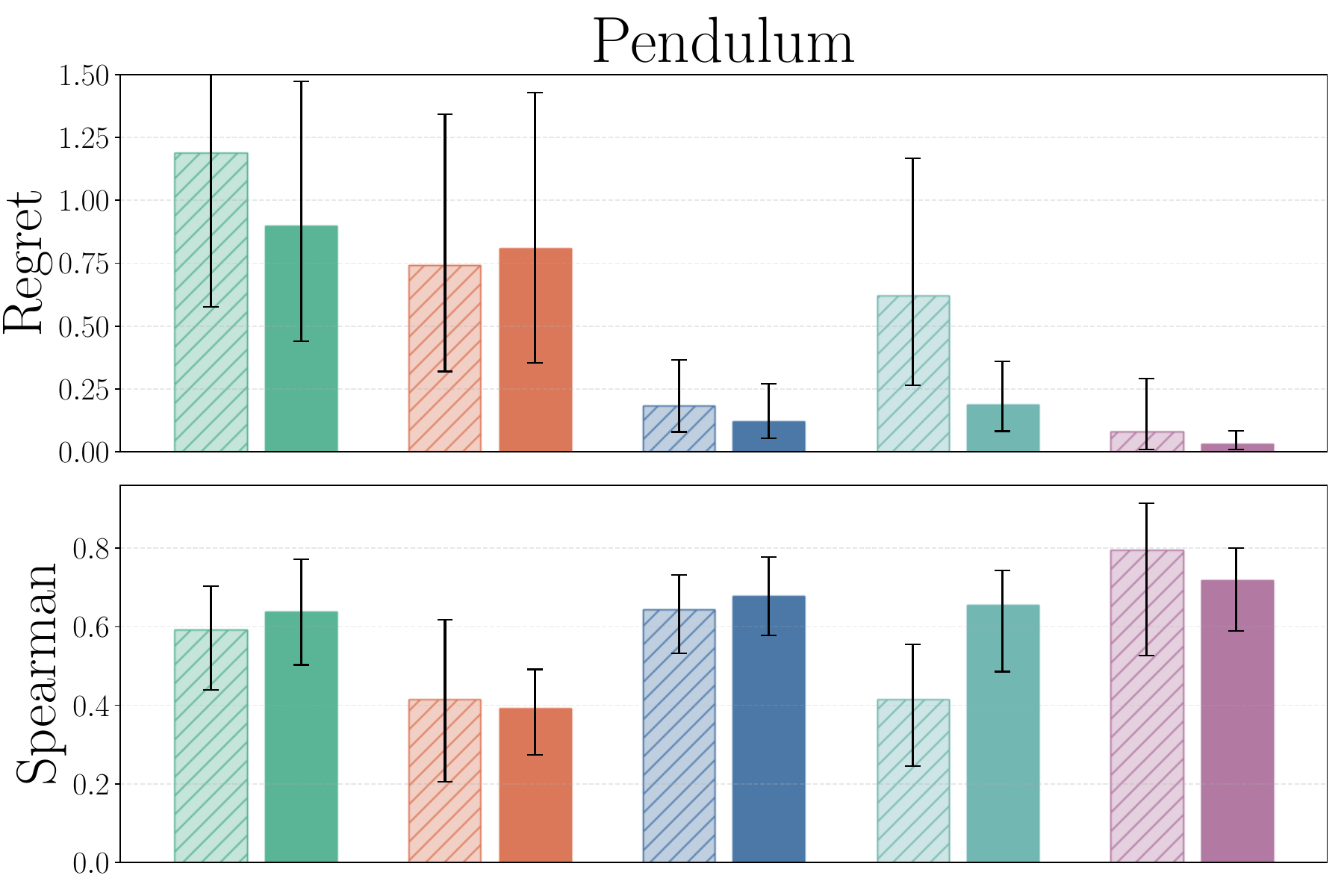}
    \end{minipage}\hfill
    \begin{minipage}{0.32\linewidth}
        \centering
        \includegraphics[width=\linewidth]{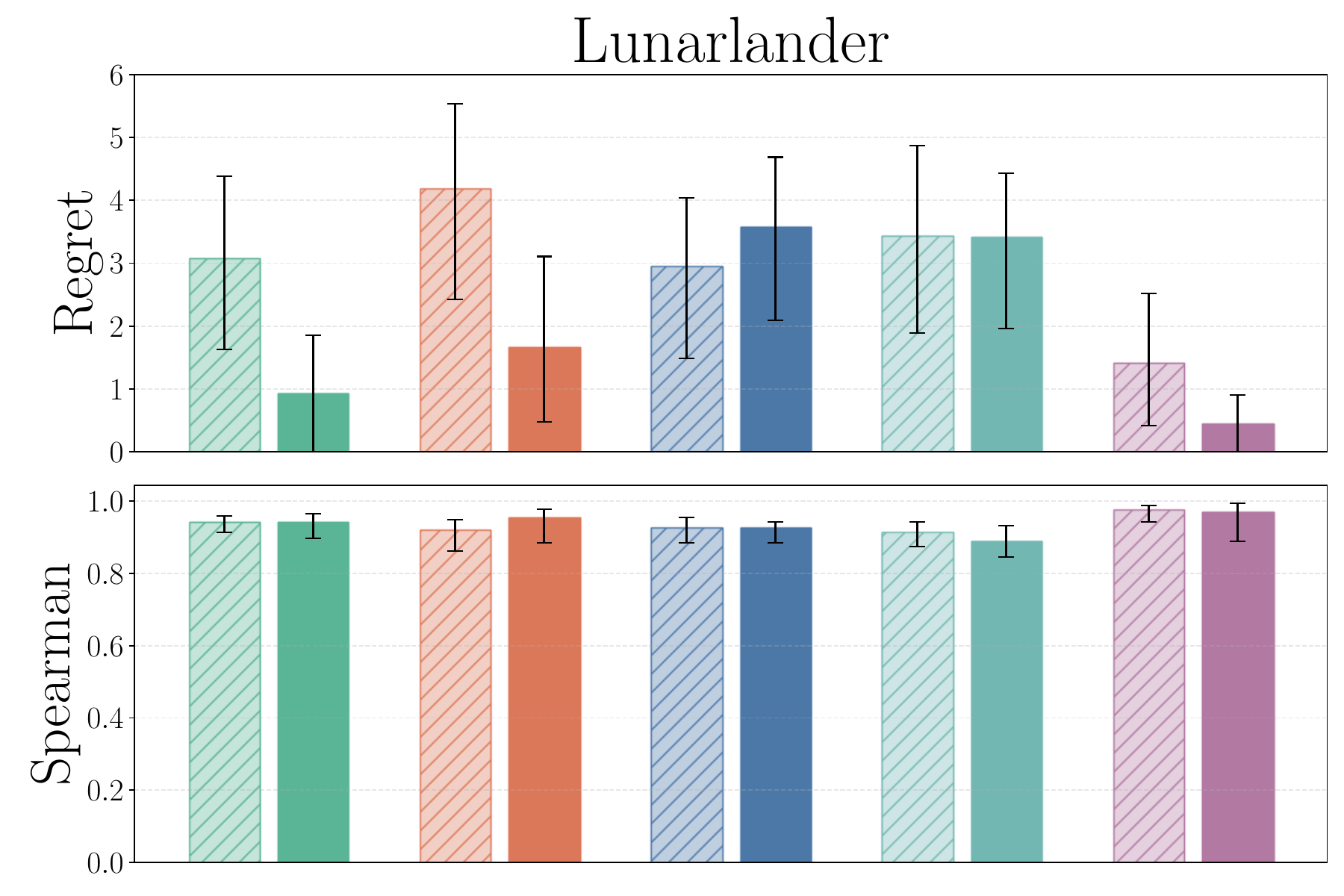}
    \end{minipage}\hfill
    \begin{minipage}{0.32\linewidth}
        \centering
        \includegraphics[width=\linewidth]{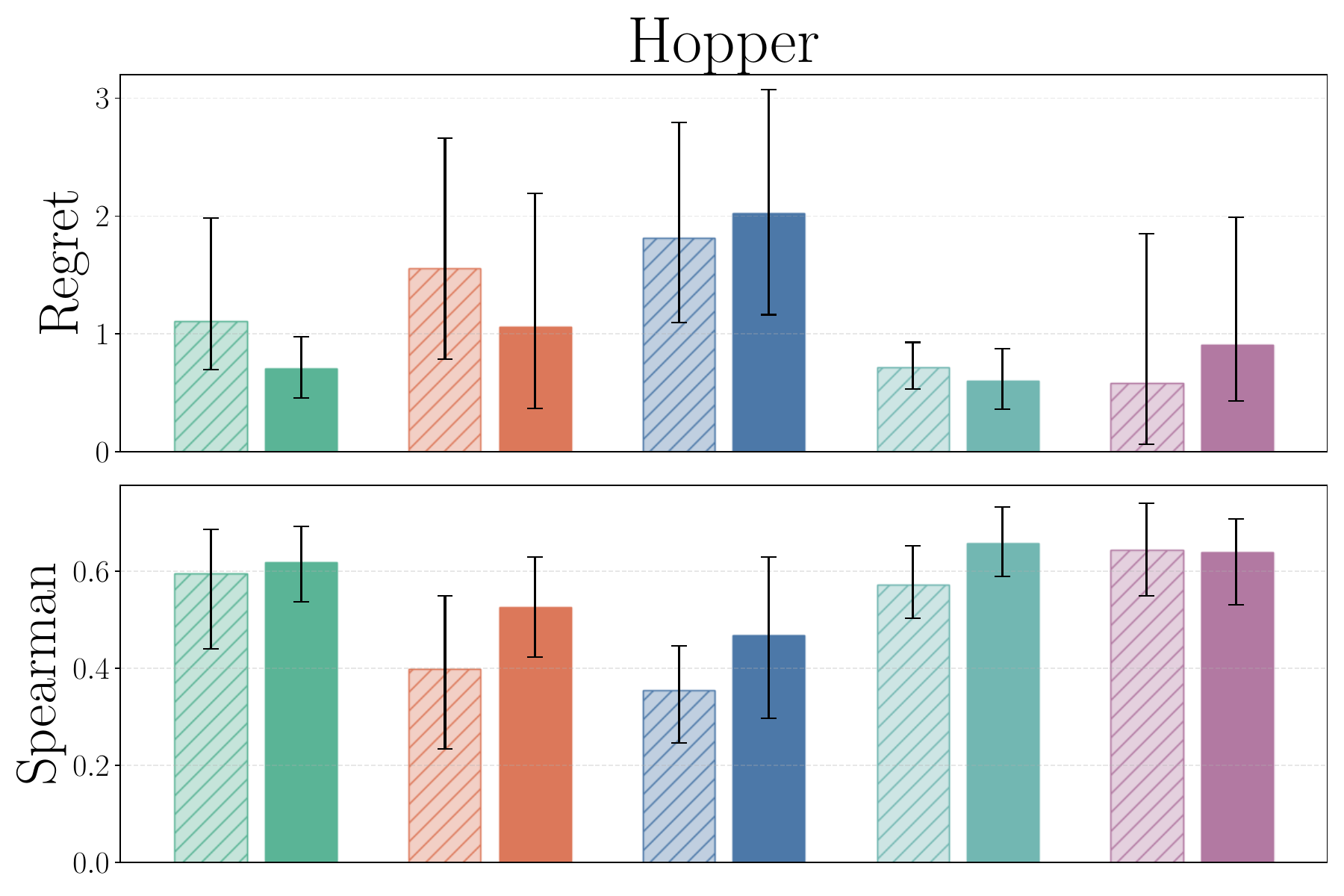}
    \end{minipage}

    \vspace{-0.3em}

    \begin{minipage}{0.32\linewidth}
        \centering
        \includegraphics[width=\linewidth]{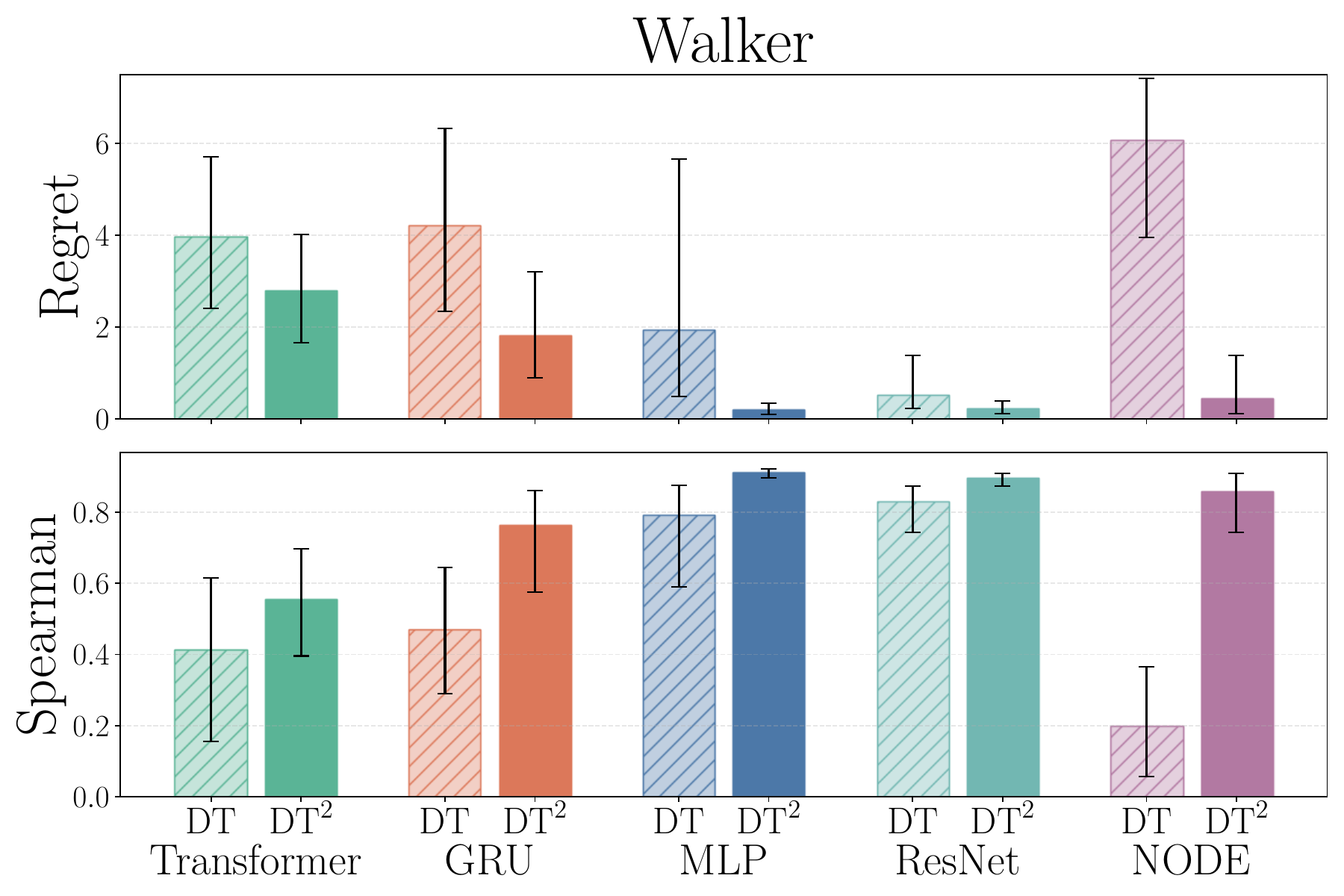}
    \end{minipage}\hfill
    \begin{minipage}{0.32\linewidth}
        \centering
        \includegraphics[width=\linewidth]{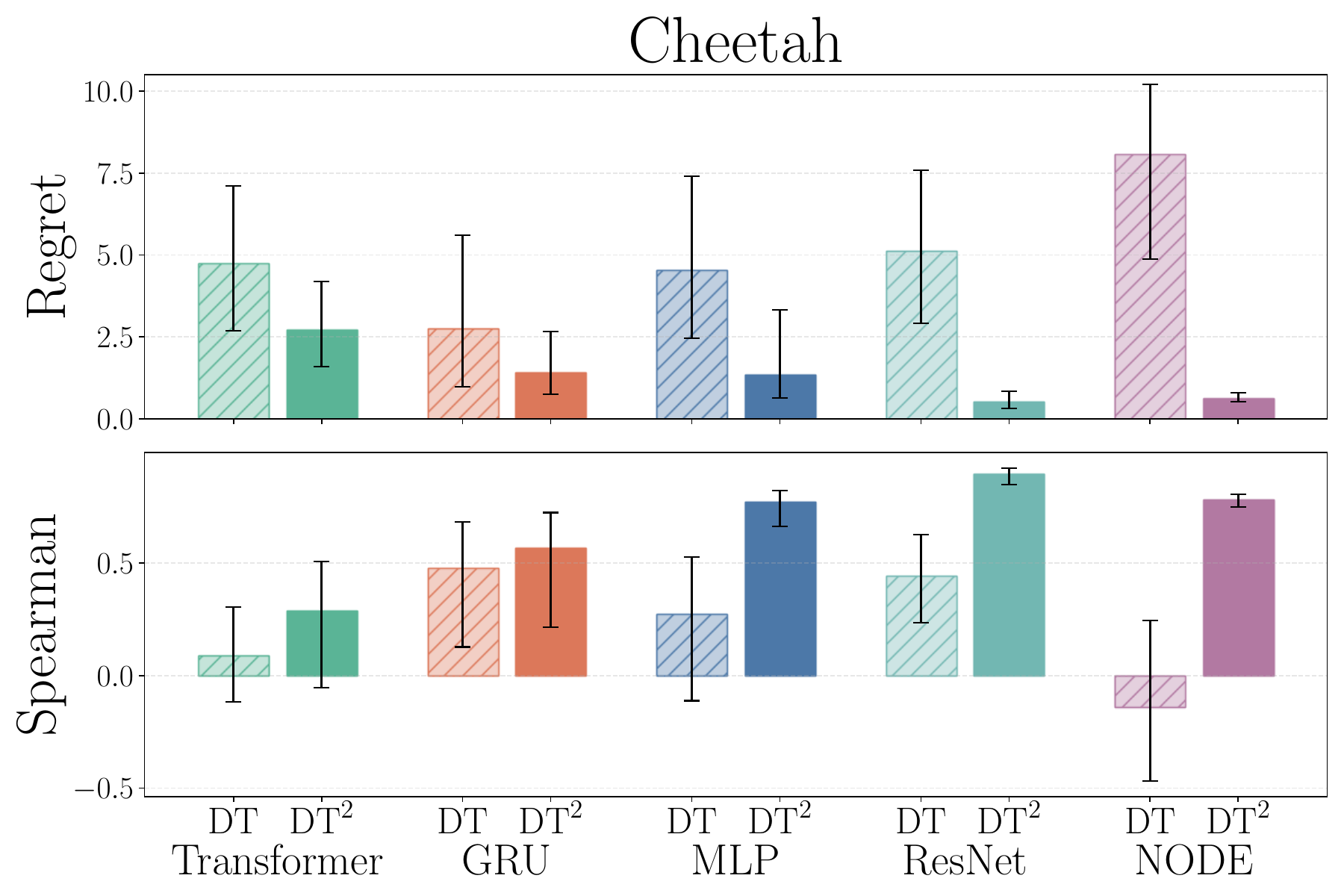}
    \end{minipage}\hfill
    \begin{minipage}{0.32\linewidth}
        \centering
        \includegraphics[width=\linewidth]{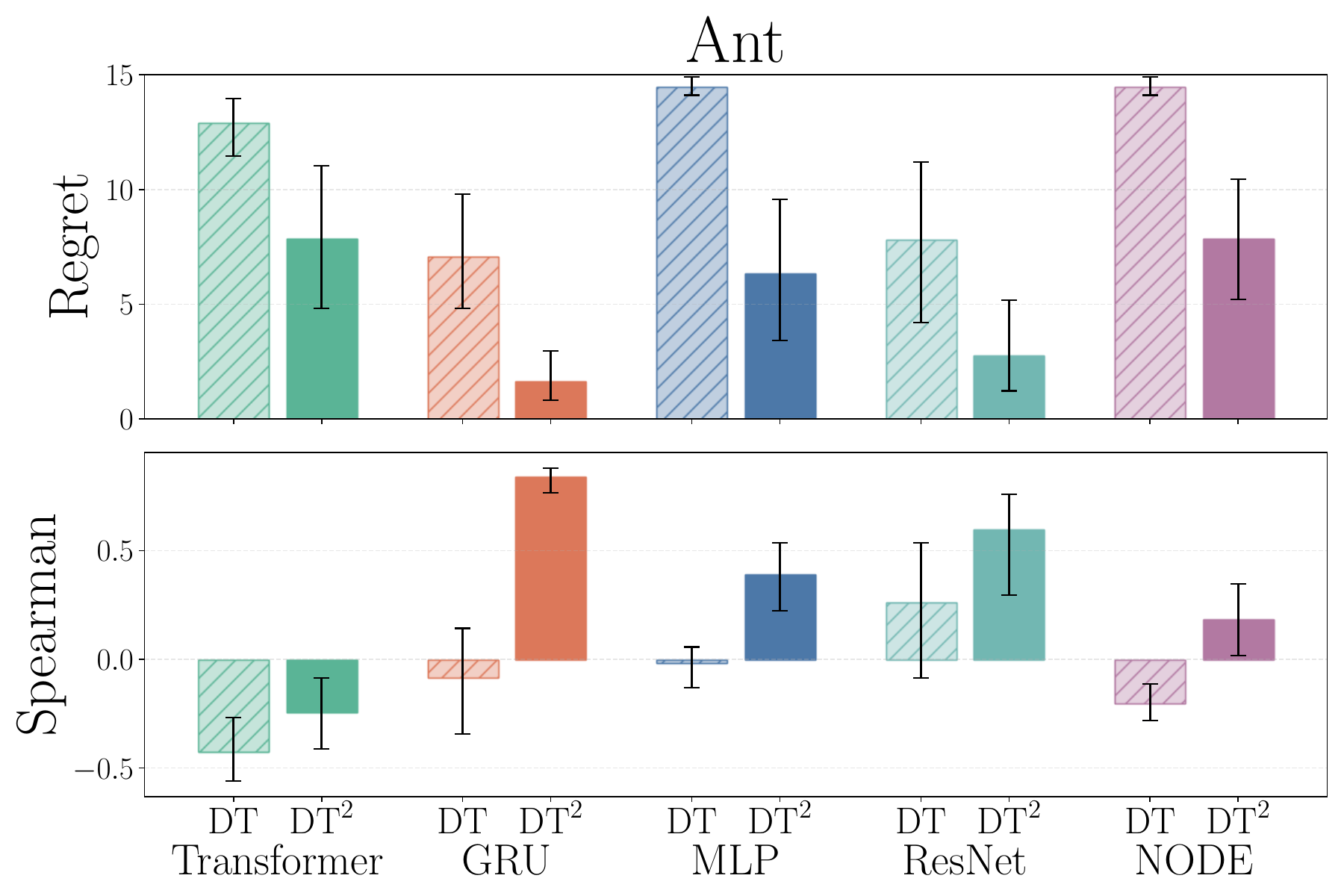}
    \end{minipage}

    \caption{Regret and Spearman's correlation for preference orderings from $\mathcal{L}_\text{sim}^\text{NLL}$ (hashed) and $\mathcal{L}_\text{\DTtwo}$ (solid) DTs across base architectures in six continuous control environments. We report averages over $10$ seeds, with 95\% CIs.}
    \label{fig:rl-envs}
\end{figure*}
Finally, we investigate a system with partially observable dynamics, where an unobserved latent variable $z_t$ dictates the transition distribution, ensuring $\Phi \notin \mathcal{F}$ even with an expressive hypothesis space of three-layer MLPs. We consider a one-dimensional system governed by $x_{t+1} = x_t + a_t + z_t$, where $z_t \sim \mathcal{N}(0, 25)$ dominates, and set $\Pi$ with two constant-action policies, $\pi_{\text{high}}=0.01$ and $\pi_{\text{low}}=-0.01$, and a reward $r(x_t, a_t, x_{t+1}) = x_t$ such that $\pi_{\text{high}} \succ \pi_{\text{low}}$. Across $20$ seeds, we collect $\mathcal{D}$ by drawing $a_t \sim U[-1,1]$. 

Minimising $\mathcal{L}_\text{sim}^\text{MSE}$ encourages learning the conditional expectation $\mathbb{E}[x_{t+1} \mid x_t, a_t]$ by averaging over the unobserved latent $z_t$. While $a_t$ and $z_t$ are independent in the generative process, the high variance of $z_t$ can cause, with finite data, spurious correlations to emerge where actions coincide with opposing latent values. Consequently, the empirical conditional mean can exhibit a slope flip, causing the $\mathcal{L}_\text{sim}^\text{MSE}$ model to learn a negative relationship between $a_t$ and $x_{t+1}$ to fit the noise in $\mathcal{D}$. As such, it derives the correct preference ordering on $\Pi$ only $65$\% of the time. In contrast, \DTtwo utilises the ranking loss to distinguish $\pi_{\text{high}}$ from $\pi_{\text{low}}$, better conserving the true positive slope of the action, achieving $100$\% ranking accuracy.

\begin{table*}[t]
\centering
\caption{Regret and Spearman's correlation for preference orderings from different model learning paradigms in six continuous control environments. For $\mathcal{L}_\text{\DTtwo}$ and $\mathcal{L}_\text{sim}^\text{NLL}$ we report the best-performing architectures from Figure \ref{fig:rl-envs} for each environment. We report averages over $10$ seeds, with standard errors. \textcolor{red}{\textbf{Red}} indicates best performing, \textcolor{blue}{\underline{blue}} indicates second best.}
\label{tab:all_baselines}
\resizebox{\textwidth}{!}{
\begin{tabular}{lcc cc cc cc cc cc cc}
\toprule
& \multicolumn{2}{c}{\texttt{Pendulum}}
& \multicolumn{2}{c}{\texttt{Lunarlander}}
& \multicolumn{2}{c}{\texttt{Hopper}} 
& \multicolumn{2}{c}{\texttt{Walker}}
& \multicolumn{2}{c}{\texttt{Cheetah}}
& \multicolumn{2}{c}{\texttt{Ant}} 
& \\
\cmidrule(lr){2-3} \cmidrule(lr){4-5} \cmidrule(lr){6-7} 
\cmidrule(lr){8-9} \cmidrule(lr){10-11} \cmidrule(lr){12-13}
Method 
& Regret ($\downarrow$)& Spearman ($\uparrow$)
& Regret ($\downarrow$)& Spearman ($\uparrow$)
& Regret ($\downarrow$)& Spearman ($\uparrow$)
& Regret ($\downarrow$)& Spearman ($\uparrow$)
& Regret ($\downarrow$)& Spearman ($\uparrow$)
& Regret ($\downarrow$)& Spearman ($\uparrow$)
 \\
\midrule

$\mathcal{L}_\text{sim}^\text{NLL}$ 
& $0.08$\,{\scriptsize($0.07$)} & $\textcolor{red}{\mathbf{0.79}}$\,{\scriptsize($0.11$)}
& $\textcolor{blue}{\underline{1.41}}$\,{\scriptsize($0.72$)} & $\textcolor{red}{\mathbf{0.98}}$\,{\scriptsize($0.01$)}
& ${0.59}$\,{\scriptsize($0.45$)} & $0.64$\,{\scriptsize($0.06$)}
& $0.55$\,{\scriptsize($0.30$)} & $0.83$\,{\scriptsize($0.01$)}
& $2.77$\,{\scriptsize($1.28$)} & $0.48$\,{\scriptsize($0.16$)}
& $7.10$\,{\scriptsize($1.56$)} & $-0.09$\,{\scriptsize($0.16$)}
 \\

MOReL 
& $0.14$\,{\scriptsize($0.05$)} & $0.69$\,{\scriptsize($0.09$)}
& $21.06$\,{\scriptsize($9.77$)} & $0.73$\,{\scriptsize($0.03$)}
& $\textcolor{red}{\mathbf{0.51}}$\,{\scriptsize($0.17$)} & $0.57$\,{\scriptsize($0.09$)}
& ${0.35}$\,{\scriptsize($0.06$)} & $\textcolor{blue}{\underline{0.87}}$\,{\scriptsize($0.02$)}
& $5.08$\,{\scriptsize($1.20$)} & $-0.29$\,{\scriptsize($0.12$)}
& $5.12$\,{\scriptsize($1.23$)} & $0.22$\,{\scriptsize($0.08$)}
\\

MOPO 
& $0.36$\,{\scriptsize($0.26$)} & $0.47$\,{\scriptsize($0.06$)}
& $1.45$\,{\scriptsize($0.52$)} & $0.62$\,{\scriptsize($0.09$)}
& $8.50$\,{\scriptsize($4.19$)} & $-0.27$\,{\scriptsize($0.12$)}
& $8.13$\,{\scriptsize($2.31$)} & $0.21$\,{\scriptsize($0.15$)}
& $9.57$\,{\scriptsize($1.28$)} & $-0.51$\,{\scriptsize($0.12$)}
& $11.98$\,{\scriptsize($1.38$)} & $0.05$\,{\scriptsize($0.13$)}
\\

ROMI 
& $0.32$\,{\scriptsize($0.27$)} & $0.49$\,{\scriptsize($0.12$)}
& $8.50$\,{\scriptsize($1.34$)} & $0.80$\,{\scriptsize($0.02$)}
& ${0.64}$\,{\scriptsize($0.11$)} & $0.60$\,{\scriptsize($0.10$)}
& $\textcolor{blue}{\underline{0.25}}$\,{\scriptsize($0.07$)} & $0.84$\,{\scriptsize($0.03$)}
& $13.45$\,{\scriptsize($0.34$)} & $-0.87$\,{\scriptsize($0.05$)}
& $6.14$\,{\scriptsize($2.45$)} & $0.33$\,{\scriptsize($0.16$)}\\

VaGraM 
& $1.18$\,{\scriptsize($0.79$)} & $0.50$\,{\scriptsize($0.13$)}
& $25.47$\,{\scriptsize($15.28$)} & $0.84$\,{\scriptsize($0.06$)}
& ${0.65}$\,{\scriptsize($0.15$)} & $0.46$\,{\scriptsize($0.08$)}
& $4.63$\,{\scriptsize($2.28$)} & $0.43$\,{\scriptsize($0.25$)}
& $6.62$\,{\scriptsize($2.00$)} & $-0.26$\,{\scriptsize($0.32$)}
& $12.50$\,{\scriptsize($0.89$)} & $-0.38$\,{\scriptsize($0.11$)}\\

HDTwin 
& $13.85$\,{\scriptsize($6.55$)} & $-0.10$\,{\scriptsize($0.21$)}
& $49.10$\,{\scriptsize($39.01$)} & $0.54$\,{\scriptsize($0.18$)}
& ${0.94}$\,{\scriptsize($0.49$)} & $0.40$\,{\scriptsize($0.11$)}
& $7.50$\,{\scriptsize($1.15$)} & $-0.13$\,{\scriptsize($0.16$)}
& $5.32$\,{\scriptsize($2.04$)} & $0.16$\,{\scriptsize($0.23$)}
& $6.18$\,{\scriptsize($2.07$)} & $0.24$\,{\scriptsize($0.15$)}
 \\

FQE 
& $\textcolor{red}{\mathbf{0.02}}$\,{\scriptsize($0.01$)} & $\textcolor{blue}{\underline{0.76}}$\,{\scriptsize($0.03$)}
& $2.51$\,{\scriptsize($1.85$)} & $0.83$\,{\scriptsize($0.04$)}
& $\textcolor{blue}{\underline{0.54}}$\,{\scriptsize($0.27$)} & $\textcolor{red}{\mathbf{0.78}}$\,{\scriptsize($0.05$)}
& $1.18$\,{\scriptsize($0.66$)} & $0.78$\,{\scriptsize($0.07$)}
& $\textcolor{blue}{\underline{1.29}}$\,{\scriptsize($0.30$)} & $\textcolor{blue}{\underline{0.84}}$\,{\scriptsize($0.03$)}
& $\textcolor{red}{\mathbf{0}}$\,{\scriptsize($0$)} & $\textcolor{red}{\mathbf{0.94}}$\,{\scriptsize($0$)}
\\

$\mathcal{L}_\text{\DTtwo}$ 
& $\textcolor{blue}{\underline{0.03}}$\,{\scriptsize($0.02$)} & $0.72$\,{\scriptsize($0.06$)}
& $\textcolor{red}{\mathbf{0.45}}$\,{\scriptsize($0.45$)} & $\textcolor{blue}{\underline{0.97}}$\,{\scriptsize($0.02$)}
& ${0.60}$\,{\scriptsize($0.16$)} & $\textcolor{blue}{\underline{0.66}}$\,{\scriptsize($0.04$)}
& $\textcolor{red}{\mathbf{0.21}}$\,{\scriptsize($0.08$)} & $\textcolor{red}{\mathbf{0.91}}$\,{\scriptsize($0.01$)}
& $\textcolor{red}{\mathbf{0.55}}$\,{\scriptsize($0.16$)} & $\textcolor{red}{\mathbf{0.90}}$\,{\scriptsize($0.02$)}
& $\textcolor{blue}{\underline{1.64}}$\,{\scriptsize($0.68$)} & $\textcolor{blue}{\underline{0.84}}$\,{\scriptsize($0.04$)}
\\

\bottomrule
\end{tabular}}
\end{table*}

\subsection{Expressive Hypothesis Spaces}
\label{ssec:expressive-hypothesis-space}

We now wish to test our hypothesis that the takeaways from Theorems \ref{thm:suboptimality} and \ref{thm:better_model_exists} persist even when $\mathcal{F}$ is expressive and could contain $\Phi$. We use six  Mujoco \citep{todorov2012mujoco} and Gymnasium \cite{towers2024gymnasium} continuous control environments (\texttt{Pendulum}, \texttt{Lunarlander}, \texttt{Hopper}, \texttt{Walker}, \texttt{Cheetah}, \texttt{Ant}) to act as `real-world systems' $\Phi$ that span a range of complexities, from $4$- to $35$-dimensional state-action spaces. For each environment, we define a candidate set $\Pi$ of six policies taken from evenly-spaced checkpoints of a PPO \cite{schulman2017proximal} training run. We then generate $\mathcal{D}$ by deploying each $\pi \in \Pi$ in $\Phi$ for a fixed number of steps. To determine the ground-truth preference ordering over $\Pi$, we empirically estimate $J(\pi)$ for each $\pi \in \Pi$ using Monte Carlo rollouts in $\Phi$.

Firstly, we compare $\mathcal{L}_\text{sim}^\text{NLL}$-trained DTs, represents the prevailing context-agnostic approach, with training via \DTtwo. For \DTtwo, we first learn a $Q^\pi_\psi$ for each $\pi \in \Pi$ using FQE on $\mathcal{D}$, and then train the DT with $\lambda=0.1$ and $\alpha=1$. To show the generality of our method, we use a number of expressive architectures to form the basis of the DTs, training them to output the mean and variance of a multivariate Gaussian distribution for the next state. These include a transformer \cite{vaswani2017attention}, gated recurrent unit (GRU) \cite{cho2014learning}, multi-layer perceptron (MLP), residual network (ResNet) \cite{he2016deep}, and neural ODE (NODE) \cite{chen2018neural}.

To determine the DT-derived preference orderings, $\hat{\succ}$, we also use Monte Carlo rollouts. To evaluate these against the ground-truth $\succ$, we report the Spearman's rank correlation \cite{spearman1904proof}, to measure global ranking alignment, and the decision regret when selecting the optimal policy, to measure the value gap between  $\pi^*$ and $\hat \pi^*$, in Figure~\ref{fig:rl-envs}. $\mathcal{L}_\text{\DTtwo}$ DTs outperform their $\mathcal{L}_\text{sim}^\text{NLL}$ counterpart in $83.3\%$ of the architecture-environment-metric couplings, and there is no environment or architecture that \DTtwo does not outperform in, on average. The average regret, across all environments and architectures, is $4.08$ using $\mathcal{L}_\text{sim}^\text{NLL}$, and $1.88$ using $\mathcal{L}_\text{\DTtwo}$, corresponding to a $54\%$ reduction. Similarly, for Spearman's correlation, $\mathcal{L}_\text{sim}^\text{NLL}$ DTs average $0.45$ while $\mathcal{L}_\text{\DTtwo}$ DTs average $0.66$, corresponding to a $47\%$ improvement. 

In terms of simulation fidelity, we do see that there is a necessary trade-off to achieve this increased preference ordering performance. The average test set transition MSE using $\mathcal{L}_\text{sim}^\text{NLL}$ is $1.21$, while for $\mathcal{L}_\text{\DTtwo}$ it is $1.41$, performing $17\%$ worse in this regard. These results validate our core thesis: models optimised solely for simulation fidelity often misallocate capacity to dynamics irrelevant for policy ranking, even when using expressive hypothesis classes. Through explicit decision-targeted training, \DTtwo recovers significantly better preference orderings, at a modest MSE cost.

Secondly, we compare the preference orderings from the best performing $\mathcal{L}_\text{\DTtwo}$ and $\mathcal{L}_\text{sim}^\text{NLL}$ models from Figure~\ref{fig:rl-envs} with a handful of baseline models, including MBRL baselines, a recent DT baseline, and pure FQE rankings. The MBRL methods include MOReL \citep{kidambi2020morel} and MOPO \citep{yu2020mopo}, which both train neural ensembles of dynamics models and construct a `pessimistic MDP' that penalises rewards in transitions where there is high model uncertainty, i.e. they redefine the reward as
\begin{equation}
    \tilde{r}(\mathbf{x}_t, \mathbf{a}_t, \mathbf{x}_{t+1}) = r(\mathbf{x}_t, \mathbf{a}_t, \mathbf{x}_{t+1}) - \lambda\ u(\mathbf{x}_t, \mathbf{a}_t)
\end{equation}
for some measure of model uncertainty $u$. We also compare with a more recent offline MBRL work, ROMI \citep{qiao2026model}, which alternates between training an ensemble of neural dynamics models and a SAC policy \citep{haarnoja2018soft}, weighting the dynamics training using a value-aware approach to incorporate pessimism adversarially, rather than based on uncertainty. Finally, we adapt VaGraM \citep{voelcker2022value}, an online value-aware MBRL method, to be suitable for our offline setting, training using an MSE loss weighted by the average squared gradient norm of all $Q^\pi_\psi$-functions:
\begin{equation}
        \mathbb{E}_{\mathcal{D}} \left[ \|\mathbf{x}' - \mu_\theta(\mathbf{x}, \mathbf{a})\|^2 \cdot \frac{1}{|\Pi|} \sum_{\pi \in \Pi} \|\nabla_{\mathbf{x}'} Q^\pi_\psi(\mathbf{x}', \pi(\mathbf{x}'))\|^2 \right].
    \label{eq:vagram_adapted}
\end{equation}
This weights the importance of transitions by how sensitive candidate policy returns are to errors in those regions.

For the DT baseline, we compare with HDTwin \citep{holt2024automatically}, which performs an LLM-guided search process over hybrid DT structures (involving mechanistic and neural components), to find the architecture that minimises validation MSE.

These results are reported in Table~\ref{tab:all_baselines}. We see that \DTtwo tends to outperform these baselines, placing amongst the top two methods, in terms of both regret and Spearman's, in the majority of environments. FQE is the most competitive of the baselines, which validates the motivation for our method—to use the ranking abilities of OPE to supervise DT training—but, interestingly, we even see that \DTtwo improves upon its ranking supervisor in some environments.

While the MBRL methods go beyond optimising for raw simulation fidelity in ways that are effective for policy learning, we see that, generally, the preference orderings from these methods are about as competitive as the $\mathcal{L}_\text{sim}^\text{NLL}$ DTs. Clearly, bespoke alterations, such as via our \DTtwo method, are required to optimise dynamics models for the task of policy selection.

\subsection{Case Study -- Cancer Treatment Digital Twin}
\label{ssec:cancer-exp}
We now investigate a medical case study, where $\Pi$ is comprised of human-defined treatment plans. We use a \texttt{Cancer} environment from \texttt{DTR-Bench} \citep{luo2024dtr} that simulates the progression of cancer with metastasis under chemotherapy and radiotherapy, based on the mathematical model of \cite{ghaffari2016mixed}, with a $9$-dimensional state-action space and a reward function that balances tumour reduction against treatment toxicity. We define five distinct treatment plans to act as the candidate policy set: 1) no treatment, 2) fractionated radiotherapy, 3) metronomic chemotherapy, 4) adaptive combined therapy, and 5) aggressive combined therapy. We pursue a similar training and evaluation pipeline as in $\S$\ref{ssec:expressive-hypothesis-space}, using the ResNet architecture as the backbone for the $\mathcal{L}_\text{sim}^\text{NLL}$ and $\mathcal{L}_\text{\DTtwo}$ DTs, since it tended to perform well in the previous environments.

We report the results in Table~\ref{tab:cancer-results}. Once again, using $\mathcal{L}_\text{\DTtwo}$ leads to a significantly better regret and Spearman's rank correlation than $\mathcal{L}_\text{sim}^\text{NLL}$, indicating substantially better potential for clinical decision support. This is heightened by the fact that the MSE cost is only $2\%$ here, meaning that decision-targeted training would minimally impact the result of any human interrogation of DT-generated simulations in this case.

\begin{table}[t]
    \centering
    \caption{Regret and Spearman's correlation for preference orderings from $\mathcal{L}_\text{sim}^\text{NLL}$ and $\mathcal{L}_\text{\DTtwo}$ DTs in \texttt{Cancer} environment. Averaged over $5$ seeds with standard errors.}
    \label{tab:cancer-results}
    \begin{tabular}{c c c c}
    \toprule
        Loss & Regret ($\downarrow$) & Spearman ($\uparrow$)  & MSE ($\downarrow$)\\
    \midrule
        $\mathcal{L}_\text{sim}^\text{NLL}$ & $59.56$\,{\scriptsize($0.83$)} & $0.10$\,{\scriptsize($0.08$)} & $0.329$\,{\scriptsize($0.01$)}\\
        $\mathcal{L}_\text{\DTtwo}$ & $26.96$\,{\scriptsize($11.04$)} & $0.64$\,{\scriptsize($0.10$)} & $0.334$\,{\scriptsize($0.01$)}\\
    \bottomrule
    \end{tabular}
\end{table}

Furthermore, we test preference orderings across $11$ policies that are unseen during training. These include six PPO checkpoints, obtained as in $\S$\ref{ssec:expressive-hypothesis-space}, and five further manually-defined policies: 1) pulsed chemotherapy, 2) hypofractionated radiotherapy, 3) aggressive combined therapy followed by maintenance chemo, 4) alternating modality treatment, and 5) dose-escalating combined treatment. In Table~\ref{tab:cancer-ood-results} we see that DT$^2$ performs best again, indicating that decision-targeting is not overfitting to the policies observed during training, but encourages learning dynamics that assist in general decision-making according to $r$. For some further analysis on unseen policy results across more environments, see Appendix~\ref{app:ood-results}.

\begin{table}[ht]
    \centering
    \caption{Regret and Spearman's correlation for preference orderings from $\mathcal{L}_\text{sim}^\text{NLL}$ and $\mathcal{L}_\text{\DTtwo}$ DTs in \texttt{Cancer} environment across unseen policies. Averaged over $10$ seeds with standard errors.}
    \label{tab:cancer-ood-results}
    \begin{tabular}{c c c}
    \toprule
        Loss & Regret ($\downarrow$) & Spearman ($\uparrow$) \\
    \midrule
        $\mathcal{L}_\text{sim}^\text{NLL}$ & $95.88$\,{\scriptsize($18.23$)} & $0.40$\,{\scriptsize($0.04$)} \\
        $\mathcal{L}_\text{\DTtwo}$ & $49.27$\,{\scriptsize($17.63$)} & $0.50$\,{\scriptsize($0.06$)}\\
    \bottomrule
    \end{tabular}
\end{table}

\subsection{The Effect of $\lambda$ in $\mathcal{L}_{DT^2}$}
\label{ssec:lambda_effect}

A key choice in $\mathcal{L}_\text{\DTtwo}$ is setting $\lambda$, which dictates where the model is placed on the trade-off between preserving the FQE-derived policy ranks and optimising for simulation fidelity. We investigate the performance of $\mathcal{L}_\text{\DTtwo}$ DTs across a range of values $\lambda \in [0, 0.75]$ in the six continuous control environments from $\S$\ref{ssec:expressive-hypothesis-space}, using the same set-ups and, again, using the ResNet architecture for each DT. 

Figure~\ref{fig:lambda-ablation} plots the Spearman's correlation and test MSE obtained across these $\lambda$ settings (N.B. that the $\lambda=0$ setting is equivalent to standard training with $\mathcal{L}_\text{sim}^\text{NLL}$ only). As expected, we observe a consistent increase in ranking performance and decrease in simulation fidelity as $\lambda$ increases. Practitioners can therefore effectively use $\lambda$ to navigate to their desired point along this trade-off during training. However, we do see that ranking performance experiences diminishing returns at larger $\lambda$ values, and we experienced training instability with $\lambda \approx 1$, so we generally suggest setting a small $\lambda > 0$ for most use cases.

\begin{figure}[th]
    \centering
    \includegraphics[width=\linewidth]{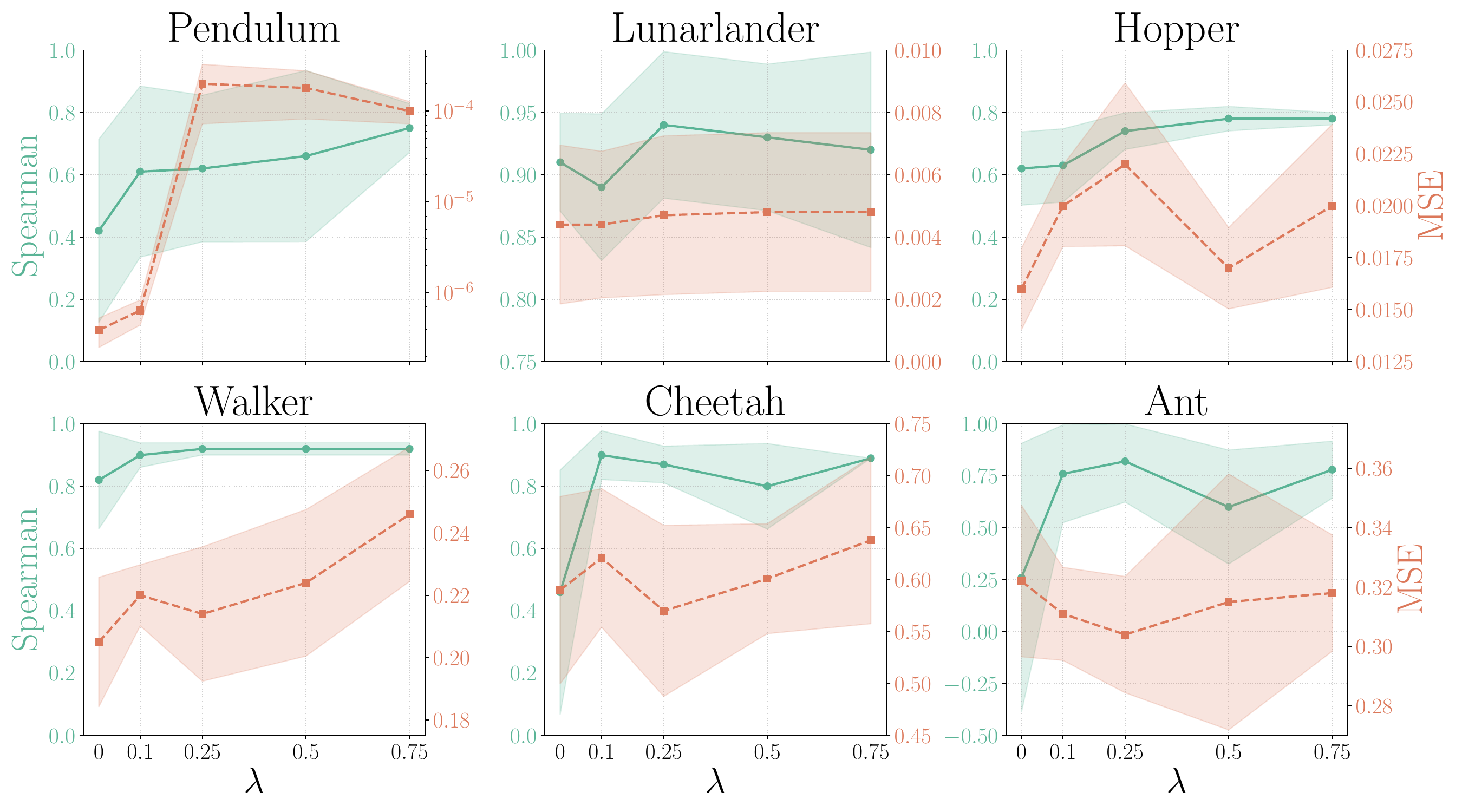}
    \caption{Spearman's correlation and test MSE for $\mathcal{L}_\text{\DTtwo}$ DTs in six continuous control environments across $\lambda$ levels. Averaged over $5$ seeds, with 95\% CIs.}
    \vspace{-10pt}
    \label{fig:lambda-ablation}
\end{figure}
\section{Discussion}
\label{sec:discussion}
This work challenges the prevailing practice of maximising simulation fidelity in DT construction. We introduced \DTtwo, a training paradigm that aligns DTs with their primary downstream purpose: decision support. By distilling FQE-derived policy rankings into DT models, \DTtwo induces significantly better preference orderings over candidate policies at a modest simulation fidelity cost. We demonstrate that these benefits hold across diverse environments, architectures and simulation losses (see further results using $\mathcal{L}_\text{sim}^\text{MSE}$ as the simulation loss in Appendix \ref{app:continuous-control-mse}), and that the trade-off between decision-making and simulation fidelity can be navigated via $\lambda$.

\subsection{Limitations} 
\DTtwo does have several limitations that are worth noting, and that can form the basis for various future works. The effect that \DTtwo training has on the ranking ability of the resulting DT naturally depends on the quality of the proxy ranking targets used. Biased or high-variance FQE value estimates can occur if the state-visitation distributions of the candidate policies are poorly covered by $\mathcal{D}$, and these may be replicated by the trained DT. To address this, future work could focus on improving the general efficacy or sample-efficiency of OPE methods, or investigating some uncertainty-aware training extensions to \DTtwo, to neglect the signal from $\mathcal{L}_\text{rank}$ when the proxy values are likely poor. Additionally, \DTtwo introduces computational overhead compared to standard training, requiring pre-trained $Q$-functions and $H$-step model rollouts during training. Potential strategies to mitigate this could involve using only a small representative subset of policies in $\Pi$ during training, or using adaptive bootstrapping horizons.

\section*{Impact Statement}
This paper presents work whose goal is to advance the field of machine learning. There are many potential societal consequences of our work, none of which we feel must be specifically highlighted here.

\section*{Acknowledgements}
Harry Amad's studentship is funded by Canon Inc. This work was supported by Azure sponsorship credits granted by Microsoft’s AI for Good Research Lab.

\bibliographystyle{icml2026}
\bibliography{bib}
\newpage
\appendix
\onecolumn

\section{Theorem Proofs}
\label{app:thm-proof}

Herein we provide the proofs for Theorem~\ref{thm:suboptimality} and Theorem~\ref{thm:better_model_exists}.

\subsection{Proof of Theorem~\ref{thm:suboptimality}}
\label{app:suboptimality-proof}
\begin{proof}
By the theorem assumption, $\mathbb{E}_{(x,a) \sim \mathcal{D}} [ D_{\mathrm{KL}}(\Phi \| \hat{\Phi}_{\theta^*}) ] > 0$. This implies the existence of a reachable state-action pair $(x_0, a_0) \in \text{supp}(\mathcal{D})$ and a measurable set $G \subseteq \mathcal{X}$ where the transition probabilities differ:
\begin{equation}
    \Phi(G \mid x_0, a_0) \neq \hat{\Phi}_{\theta^*}(G \mid x_0, a_0).
\end{equation}
Let $p := \Phi(G \mid x_0, a_0)$ and $\hat{p} := \hat{\Phi}_{\theta^*}(G \mid x_0, a_0)$. Without loss of generality, assume $p > \hat{p}$.

Consider a decision scenario with horizon $H=1$ (or, alternatively, where $r_t = 0$ for $t > 0$). Assume $\mathcal{|A|} \geq 2$, such that there be a second action $a_1 \in \mathcal{A}$ available at $x_0$. We define a bounded reward function $r: \mathcal{X} \times \mathcal{A} \times \mathcal{X} \to [0, 1]$:
\begin{equation}
    r(x, a, x') := \mathbb{I}\{x=x_0\} \left( \mathbb{I}\{a=a_0\}\mathbb{I}\{x' \in G\} + \mathbb{I}\{a=a_1\}\,c \right),
\end{equation}
where $c \in (0, 1)$ is a constant chosen such that $\hat{p} < c < p$.

Let $\Pi = \{\pi_0, \pi_1\}$ be a set of deterministic policies where $\pi_0(x_0)=a_0$ and $\pi_1(x_0)=a_1$.
The policy values under the true system $\Phi$ are:
\begin{equation}
    J(\pi_0) = p, \quad J(\pi_1) = c.
\end{equation}
Since $p > c$, the true preference ordering is $\pi_0 \succ \pi_1$.
However, under $\hat{\Phi}_{\theta^*}$ the estimated policy values are:
\begin{equation}
    \hat{J}(\pi_0; \theta^*) = \hat{p}, \quad \hat{J}(\pi_1; \theta^*) = c.
\end{equation}
Since $\hat{p} < c$, the model induces preference ordering $\pi_1 \hat \succ \pi_0$. Thus, $\hat{\Phi}_{\theta^*}$ yields an incorrect preference ordering.
\end{proof}

\subsection{Proof of Theorem~\ref{thm:better_model_exists}}
\label{app:better-model-proof}
To prove the existence of a better model in $\mathcal{F}$, we rely on the assumption of local improvement: that $\mathcal{F}$ contains an alternative model whose predictions on a local subset of transitions move in the correct direction relative to $\hat{\Phi}_{\theta^*}$.

\begin{proof}
From the proof of Theorem~\ref{thm:suboptimality}, we have $p>\hat p$ and we consider the same one-step decision task
with reward
\[
r(x, a, x') := \mathbb{I}\{x=x_0\} \left( \mathbb{I}\{a=a_0\}\mathbb{I}\{x' \in G\} + \mathbb{I}\{a=a_1\}\,c \right),
\]
and policy set $\Pi=\{\pi_0,\pi_1\}$ where $\pi_0(x_0)=a_0$ and $\pi_1(x_0)=a_1$.

By assuming Definition~\ref{def:local-improvement}, $\hat p' > \hat p$. Choose any constant
\begin{equation}
\label{eq:choose_c}
c \in (\hat p,\; \min\{p,\hat p'\}).
\end{equation}
This interval is non-empty because $\hat p' > \hat p$ and $p>\hat p$.

\paragraph{True ordering.}
As before, $J(\pi_0)=p$ and $J(\pi_1)=c$, hence $\pi_0 \succ \pi_1$ since $p>c$ by \eqref{eq:choose_c}.

\paragraph{Ordering under $\theta^*$.}
Under $\hat\Phi_{\theta^*}$, $\hat J(\pi_0;\theta^*)=\hat p$ and $\hat J(\pi_1;\theta^*)=c$, hence
$\pi_1 \hat\succ_{\theta^*} \pi_0$ since $c>\hat p$.

\paragraph{Ordering under $\theta'$.}
Under $\hat\Phi_{\theta'}$, $\hat J(\pi_0;\theta')=\hat p'$ and $\hat J(\pi_1;\theta')=c$, hence
$\pi_0 \hat \succ_{\theta'} \pi_1$ since $\hat p'>c$ by \eqref{eq:choose_c}.

Therefore, on the same decision problem $(r,\Pi)$, $\hat\Phi_{\theta^*}$ induces an incorrect
preference ordering, whereas $\hat\Phi_{\theta'}$ induces the correct preference ordering. That is,
$\hat\Phi_{\theta'}$ yields a strictly better preference ordering on $\Pi$ than $\hat\Phi_{\theta^*}$.
\end{proof}
\newpage

\section{Experimental Details}
\label{app:experimental-details}

We provide detailed set-ups for all experiments run here. Furthermore, we will release code upon acceptance.

\subsection{Limited Neural Capacity Experiment}
\label{app:limited_capacity}

\paragraph{System Dynamics.}
The environment consists of a 2-dimensional state space $x_t = [x_{d,t}, x_{c,t}]^\top$ and a 1-dimensional action space $a_t \in \mathbb{R}$. The transitions are defined as:
\begin{align}
    x_{d, t+1} &= M \sin(x_{d, t}) \\
    x_{c, t+1} &= x_{c, t} + \epsilon a_t
\end{align}
where $M=500$, $\epsilon=0.01$. The horizon is $H=20$. The reward function is $r(x_t, a_t, x_{t+1}) = x_{c,t}$.

\paragraph{Data Generation.}
We collected a dataset $\mathcal{D}$ of 3,000 episodes using a behavior policy that samples actions uniformly $a \sim \mathcal{U}(-1, 1)$.

\paragraph{Model Architecture.}
We utilise an MLP with an explicit information bottleneck.
\begin{itemize}
    \item \textbf{Input:} State (2) + Action (1).
    \item \textbf{Encoder:} Linear(3 $\to$ 32), Tanh, Linear(32 $\to$ 1).
    \item \textbf{Bottleneck:} The 1-dimensional output of the encoder represents the compressed state representation $z$.
    \item \textbf{Decoder:} Linear(2 $\to$ 32) (taking $z$ and $a$), Tanh, Linear(32 $\to$ 2).
\end{itemize}

\paragraph{Training.}
We trained for 30 epochs with a learning rate of $1\times 10^{-3}$ using the Adam optimiser and $\mathcal{L}_\text{sim}^\text{MSE}$ as the simulation loss. For DT$^2$, we used $\lambda=0.99$ and $\alpha=1.0$. The ranking targets were computed using the ground truth value difference between $\pi_{\text{high}}$ and $\pi_{\text{low}}$.

\subsection{Misspecified Mechanistic Model Experiment}
\label{app:mechanistic}

\paragraph{System Dynamics.}
The true system is a 1D scalar system governed by cubic control inputs:
\begin{equation}
    x_{t+1} = x_t + (a_t - a_t^3) + \xi_t, \quad \xi_t \sim \mathcal{N}(0, 0.1)
\end{equation}

\paragraph{Hypothesis Space.}
The model is constrained to be linear in actions:
\begin{equation}
    \hat{x}_{t+1} = x_t + \beta a_t
\end{equation}
Here, $\beta$ is the only learnable parameter.

\paragraph{Training Data.}
The dataset consists of 5,000 transitions where actions are sampled uniformly from: $a \sim \mathcal{U}(-1.5, 1.5)$.

\paragraph{Policies.}
We evaluate three constant policies:
\begin{enumerate}
    \item $\pi_{1}: a = -0.58$
    \item $\pi_{2}: a = 0.0$
    \item $\pi_{3}: a = +0.58$
\end{enumerate}
The optimal policy is $\pi_{3}$.

\paragraph{Training.}
We trained for 2000 epochs with a learning rate of $0.01$ using the Adam optimiser and $\mathcal{L}_\text{sim}^\text{MSE}$ as the simulation loss. For DT$^2$, we used $\lambda=0.9$ and $\alpha=1.0$. The ranking targets were computed using the ground truth policy values.

\subsection{Partially Observed Dynamics Experiment}
\label{app:pomdp}
\textbf{System Dynamics.} The true system is a 1-dimensional linear system governed by an unobserved latent variable $z_t$ with high variance. The transitions are defined as:
\begin{equation}
    x_{t+1} = x_t + a_t + z_t, \quad z_t \sim \mathcal{N}(0, 25)
\end{equation}
where the state $x_t \sim \mathcal{N}(0, 1)$ and the action $a_t \in [-1, 1]$. The large standard deviation of the latent variable ($\sigma_z = 5$) relative to the action magnitude mimics a scenario where unobserved factors dominate the transition dynamics.

\textbf{Data Generation.} To emphasise data scarcity, we collected a small dataset $\mathcal{D}$ of $N=100$ transitions. Actions in the dataset were sampled uniformly: $a \sim \mathcal{U}(-1, 1)$.

\textbf{Model Architecture.} We utilise an MLP with a single hidden layer to approximate the dynamics $x_{t+1} \approx \hat{\Phi}_\theta(x_t, a_t)$.
\begin{itemize}
    \item \textbf{Input:} State $x_t$ and Action $a_t$ (Dimension: 2).
    \item \textbf{Hidden:} Linear($2 \to 16$), ReLU activation.
    \item \textbf{Output:} Linear($16 \to 1$) representing the predicted next state $\hat{x}_{t+1}$.
\end{itemize}

\textbf{Policies.} We evaluate the model's ability to rank two constant-action policies:
\begin{enumerate}
    \item $\pi_{\text{high}} = 0.01$
    \item $\pi_{\text{low}} = -0.01$
\end{enumerate}

\textbf{Training.} We trained for 200 epochs with a learning rate of $1 \times 10^{-2}$ using the Adam optimiser. We used $\mathcal{L}_{\text{sim}}^{\text{MSE}}$ as the simulation loss. For DT$^2$, we used $\lambda = 0.9$ and a temperature $\alpha = 0.1$. The ranking targets were computed using the known ground truth policy values.

\subsection{Continuous Control Experiments}
\subsubsection{Environments}

We use six continuous control environments from the Gymnasium library \cite{towers2024gymnasium} (\url{https://github.com/Farama-Foundation/Gymnasium}, MIT license), utilising the MuJoCo physics engine \cite{todorov2012mujoco}. These environments were selected to cover a diverse range of complexities, spanning state-action space dimensionalities from 4 to 35. The specific details for each environment are as follows:

\begin{itemize}
 \item \textbf{Pendulum-v1} A classic control problem where the goal is to swing up and balance a pendulum in an inverted position from a random start.
    \begin{itemize}
        \item $\mathcal{X}$: $d_\mathcal{X} = 3$, including the cosine and sine of the pendulum's angle, and its angular velocity.
        \item $\mathcal{A}$:  $d_\mathcal{A} = 1$, the scalar joint torque applied to the pendulum.
    \end{itemize}
\item \textbf{LunarLanderContinuous-v3}: A rocket trajectory optimisation task where the agent must safely land a spacecraft on a designated pad.
\begin{itemize}
    \item $\mathcal{X}$: $d_\mathcal{X} = 8$, including coordinates, linear velocities, angle, angular velocity, and boolean flags for leg ground contact.
    \item $\mathcal{A}$: $d_\mathcal{A} = 2$, controls the main engine throttle and the side engine thrusters.
\end{itemize}

\item \textbf{Hopper-v4}: A two-dimensional one-legged robot that aims to hop forward as fast as possible.
\begin{itemize}
    \item $\mathcal{X}$:  $d_\mathcal{X} = 11$, comprising positional values and velocities of the body parts.
    \item $\mathcal{A}$: $d_\mathcal{A} = 3$, controls the torques applied to the thigh, leg, and foot joints.
\end{itemize}

\item \textbf{Walker2d-v5}: A two-dimensional bipedal robot that aims to walk forward.
\begin{itemize}
    \item $\mathcal{X}$: $d_\mathcal{X} = 17$, including joint positions and velocities for both legs and the torso.
    \item $\mathcal{A}$: $d_\mathcal{A} = 6$, controls the torques for the joints.
\end{itemize}

\item \textbf{HalfCheetah-v5}: A two-dimensional robot resembling a cheetah that aims to run forward.
\begin{itemize}
    \item $\mathcal{X}$: $d_\mathcal{X} = 17$, including positions and velocities of the torso and limbs.
    \item $\mathcal{A}$: $d_\mathcal{A} = 6$, controls the torques on the forward and back thighs, shins, and feet.
\end{itemize}

\item \textbf{Ant-v5}: A complex three-dimensional quadruped robot that aims to run forward.
\begin{itemize}
    \item $\mathcal{X}$: $d_\mathcal{X} = 27$, including the position, and orientation of the torso, joint angles, and velocities for all four legs.
    \item $\mathcal{A}$: $d_\mathcal{A} = 8$, controls the torques on the hip and ankle joints of the four legs.
\end{itemize}
N.B. we run this environment with contact forces disabled, which would add an extra $78$ state dimensions.
\end{itemize}

\subsubsection{Policies}

To construct the set of candidate policies $\Pi$, we trained agents using the Proximal Policy Optimization (PPO) algorithm~\citep{schulman2017proximal}, as implemented in the Stable Baselines3 library~\citep{stable-baselines3} (\url{https://github.com/DLR-RM/stable-baselines3}, MIT license). We use the standard \texttt{MlpPolicy} architecture, and train for $T = 1 \times 10^6$ timesteps. The hyperparameters used during PPO training, across all environments, are mostly the default options within the Stable Baselines3 library, detailed in Table~\ref{tab:ppo_hyperparams}.

\begin{table}[ht]
    \centering
    \caption{PPO Hyperparameters used for training candidate policies.}
    \label{tab:ppo_hyperparams}
    \begin{tabular}{lc}
        \toprule
        \textbf{Hyperparameter} & \textbf{Value} \\
        \midrule
        Total Timesteps ($T$) & 1,000,000 (5,000,000 for Pendulum-v1) \\
        Number of Environments ($N_{\text{envs}}$) & 8 \\
        Steps per Environment ($N_{\text{steps}}$) & 2048 \\
        Batch Size & 1024 \\
        Learning Rate & $3 \times 10^{-4}$ \\
        Discount Factor ($\gamma$) & 0.97 (0.95 for Pendulum-v1)\\
        GAE Lambda ($\lambda$) & 0.95 \\
        Clip Range ($\epsilon$) & 0.2 \\
        Entropy Coefficient & 0.01 \\
        Value Function Coefficient & 0.5 \\
        Optimizer & Adam \\
        \bottomrule
    \end{tabular}
\end{table}

We constructed the candidate policy set $\Pi$ by saving checkpoints at evenly spaced intervals during training. Specifically, we selected policies at $\{0, 0.2T, 0.4T, 0.6T, 0.8T, T\}$ steps. The ground-truth preference ordering over $\Pi$, denoted by $\succ$, was established by estimating the expected discounted return $J(\pi)$ for each $\pi \in \Pi$. This was calculated via Monte Carlo estimation using $500$ rollouts in the true environment.

To collect the DT training dataset, $\mathcal{D}$, we unroll each $\pi \in \Pi$ in the true environments for a $N$ steps. For \texttt{Pendulum} we set $N = 5000$, for \texttt{Lunarlander} we set $N = 1000$, and for the remaining environments we set $N = 10000$.

\subsubsection{Fitted Q Evaluation}

To obtain the proxy ground-truth rankings for decision-targeted training, we estimated the action-value function $Q^\pi(s, a)$ for each policy $\pi \in \Pi$ using Fitted Q Evaluation (FQE)~\citep{le2019batch}. $Q^\pi_\psi$ was parametrised as an MLP with Layer Normalization and SiLU activations. The network takes the concatenated state and action vectors $(s, a)$ as input and outputs a scalar Q-value.

To handle varying reward scales across environments and improve training stability, we standardised the rewards in the offline dataset to have zero mean and unit variance. $Q^\pi_\psi$ was trained to predict the cumulative return of these standardised rewards. During inference, predictions were rescaled to the original magnitude.

$Q^\pi_\psi$ were trained by minimising the expected Bellman error over the offline dataset $\mathcal{D}$. We employed a target network $Q_{\bar{\psi}}^\pi$ with an identical architecture, whose weights were updated via a hard update every 5 epochs. 

We trained a separate $Q^\pi_\psi$ for each policy in $\Pi$ using the AdamW optimizer, with gradient norm clipping. The specific hyperparameters are summarised in Table~\ref{tab:fqe_hyperparams}.

\begin{table}[ht]
    \centering
    \caption{Hyperparameters for Fitted Q Evaluation.}
    \label{tab:fqe_hyperparams}
    \begin{tabular}{lc}
        \toprule
        \textbf{Hyperparameter} & \textbf{Value} \\
        \midrule
        Hidden Dimensions & $[256, 256]$ \\
        Activation Function & SiLU \\
        Normalization & LayerNorm \\
        Training Epochs & $200$ \\
        Batch Size & $1,024$ \\
        Learning Rate & $3 \times 10^{-4}$ \\
        Target Update Frequency & Every $5$ epochs \\
        Target Action Samples ($K$) & $32$ \\
        Optimizer & AdamW \\
        Gradient Clipping Norm & $10.0$ \\
        \bottomrule
    \end{tabular}
\end{table}

\subsubsection{Digital Twin Training}
We evaluated our method using five distinct backbone architectures to demonstrate the architecture-agnostic nature of \DTtwo. All dynamics models were implemented as probabilistic ensembles (though we treat them as single stochastic models for the purpose of this work) that output the mean $\mu_\theta(\mathbf{x}, \mathbf{a})$ and diagonal log-variance $\log \Sigma_\theta(\mathbf{x}, \mathbf{a})$ of a Gaussian distribution $\mathcal{N}(\mu_\theta, \Sigma_\theta)$.

To improve training stability, we standardised inputs (state and action) to zero mean and unit variance using statistics computed from the offline dataset. The models were trained to predict the normalised state difference $\Delta \mathbf{x} = \mathbf{x}_{t+1} - \mathbf{x}_t$. The final prediction is obtained by denormalising the output and adding it to the current state.

To ensure numerical stability, the predicted log-variance is clamped to the range $[-10.0, 2.0]$. Additionally, during trajectory generation, the predicted next states are explicitly clipped to the specific physical bounds of the environment (e.g., joint limits) to ensure the simulation remains valid.

\paragraph{Base Architectures.}
We evaluated our method using five distinct feature extraction backbones. The output of each backbone is projected by two separate linear layers to produce the mean and log-variance vectors. The backbone details are as follows:
\begin{itemize}
    \item \textbf{MLP}:  A standard feed-forward network. It projects the input to the hidden dimension, followed by a SiLU activation, a second linear layer, and another SiLU activation.
    \item \textbf{ResNet}: A residual network consisting of an input projection followed by $4$ residual blocks. Each block applies: Linear $\to$ LayerNorm $\to$ SiLU $\to$ Dropout($0.1$) $\to$ Linear $\to$ LayerNorm, with a skip connection adding the block input to the output. A final SiLU activation is applied after the blocks.
    \item \textbf{Neural ODE (NODE)}: This backbone models the state evolution as an Ordinary Differential Equation~\citep{chen2018neural}. The hidden state transformation is defined as $\mathbf{z}(T) = \text{ODESolve}(\mathbf{z}(0), f, 0, T)$, where $f$ is a neural network (Linear $\to$ Tanh $\to$ Linear) representing the derivative $d\mathbf{z}/dt$. We used an explicit Runge-Kutta 4 (RK4) solver with a fixed step size ($N=1$ step).
    \item \textbf{Transformer}: Utilizes a standard \texttt{TransformerEncoder}~\citep{vaswani2017attention} with 2 layers, 4 attention heads, and a feed-forward dimension of twice the hidden size. To capture temporal dependencies, the model receives a context window of the $T=8$ most recent transitions. Learned positional embeddings are added to the input sequence.
    \item \textbf{GRU}: A Gated Recurrent Unit network~\citep{cho2014learning} with 2 stacked layers. Similar to the Transformer, it processes a history sequence of length $T=8$ to predict the next state.
\end{itemize}

\paragraph{Training Hyperparameters.}
All models were trained using the AdamW optimizer with gradient clipping. We employed early stopping based on the validation loss to prevent overfitting. The full set of hyperparameters is listed in Table~\ref{tab:dt_hyperparams}.

\begin{table}[ht]
    \centering
    \caption{Hyperparameters for Digital Twin Training.}
    \label{tab:dt_hyperparams}
    \begin{tabular}{lcc}
        \toprule
        \textbf{Hyperparameter} & \textbf{Standard DT} & \textbf{\DTtwo} \\
        \midrule
        Optimizer & AdamW & AdamW \\
        Learning Rate & $3 \times 10^{-4}$ & $3 \times 10^{-4}$ \\
        Batch Size & 1024 & 1024 \\
        Hidden Dimension & 64 & 64 \\
        Gradient Clipping Norm & 10.0 & 10.0 \\
        Max Epochs & 2000 & 2000 \\
        Early Stopping Patience & 20 epochs & 20 epochs \\
        \midrule
        \multicolumn{3}{l}{\textit{\DTtwo Specific Parameters}} \\
        Ranking Weight ($\lambda$) & -- & 0.1 \\
        Rank Loss Temperature ($\alpha$) & -- & 1.0 \\
        Rollout Horizon ($H$) & -- & 20 \\
        Rollout Episodes ($M$) & -- & 128 \\
        Bootstrapping & -- & Yes (using FQE) \\
        \bottomrule
    \end{tabular}
\end{table}

N.B. in the \texttt{Ant} environment, we set $H=50$. For \DTtwo and standard DTs, we used $\mathcal{L}_\text{sim}^\text{NLL}$ as the simulation loss for the results displayed in $\S$\ref{ssec:expressive-hypothesis-space}, and results using $\mathcal{L}_\text{sim}^\text{MSE}$ are in Appendix \ref{app:continuous-control-mse}.

\paragraph{VaGraM Baseline.}
We adapted the VaGraM method~\citep{voelcker2022value} to our offline ranking setting. Originally designed for online MBRL, alongside learning a single policy, we adapt it to train on batched data, and to weight transition errors based on the sensitivity of the value functions of \textit{all} candidate policies in $\Pi$.

The model was trained to minimise a weighted Mean Squared Error (MSE) objective:
\begin{equation}
    \mathcal{L}_{\text{VaGraM}}(\theta) = \mathbb{E}_{(\mathbf{x}, \mathbf{a}, \mathbf{x}') \sim \mathcal{D}} \left[ ||\mathbf{x}' - \mu_\theta(\mathbf{x}, \mathbf{a})||^2 \cdot w(\mathbf{x}') \right],
\end{equation}
where the weight $w(\mathbf{x}')$ is the average squared norm of the value gradients at the next state across all policies:
\begin{equation}
    w(\mathbf{x}') = \frac{1}{|\Pi|} \sum_{\pi \in \Pi} ||\nabla_{\mathbf{x}'} Q^\pi_\psi(\mathbf{x}', \pi(\mathbf{x}'))||^2_2.
\end{equation}
The gradients $\nabla_{\mathbf{x}'} Q$ were computed via backpropagation through the pre-trained FQE networks (which were frozen during this process). We trained the VaGraM model using the MLP backbone and the same settings as the standard DTs.

\paragraph{MOReL Baseline}

We implement the MOReL \citep{kidambi2020morel} dynamics ensemble following the parameterisation described in the original work. Each ensemble member is a Gaussian mean model of the form
\begin{equation}
    \hat{x}_{t+1} = x_t + \sigma_\delta \cdot \mathrm{MLP}\!\left(\frac{x_t - \mu_s}{\sigma_s},\, \frac{a_t - \mu_a}{\sigma_a}\right),
\end{equation}
where $\mu_s, \sigma_s, \mu_a, \sigma_a$ are the empirical mean and standard deviation of states and actions in $\mathcal{D}$, and $\sigma_\delta$ is the empirical standard deviation of one-step state deltas. Each model is trained by minimising the MSE on normalised deltas, equivalent to Gaussian MLE under fixed diagonal covariance. The specific architecture and training hyperparameters are given in Table~\ref{tab:morel_hyperparams}.

After training, we compute the pairwise $\ell_2$ disagreement between all ensemble members on the training transitions and set the Unknown State-Action Detector (USAD) threshold as
\begin{equation}
    \tau = \mu_d + \beta \cdot \sigma_d,
\end{equation}
where $\mu_d$ and $\sigma_d$ are the mean and standard deviation of the per-transition disagreement scores, and $\beta = 5$ following the paper default. Transitions flagged as unknown by USAD receive the halt reward $r_\mathrm{halt} = r_\mathrm{min} - \delta_r$, where $r_\mathrm{min}$ is the minimum reward observed in $\mathcal{D}$ and $\delta_r$ is an environment-specific offset. For policy evaluation, we roll out each candidate policy $\pi \in \Pi$ in the resulting pessimistic MDP.

\begin{table}[h]
\caption{Hyperparameters for the MOReL dynamics ensemble.}
\label{tab:morel_hyperparams}
\centering
\begin{tabular}{ll}
\toprule
Hyperparameter & Value \\
\midrule
Ensemble Size & 4 \\
Hidden Dimension & 200 \\
Hidden Layers & 4 \\
Activation & ReLU \\
Max Epochs & 2{,}000 \\
Early Stopping Patience & 20 epochs \\
Batch Size & 256 \\
Learning Rate & $5 \times 10^{-4}$ \\
Optimizer & Adam \\
Bootstrap Sampling & Yes \\
Validation Fraction & 0.1 \\
Threshold $\beta$ & 5.0 \\
\bottomrule
\end{tabular}
\end{table}

\paragraph{MOPO Baseline}

We implement the MOPO \citep{yu2020mopo} dynamics ensemble following the architecture described in the original work. Each ensemble member is a probabilistic model that outputs the mean and diagonal log-variance of a Gaussian distribution over normalised state deltas:
\begin{equation}
    p_\theta(\Delta x \mid x_t, a_t) = \mathcal{N}\!\left(\mu_\theta(x_t, a_t),\, \mathrm{diag}(\sigma^2_\theta(x_t, a_t))\right),
\end{equation}
where $\Delta x = x_{t+1} - x_t$. Inputs are normalised to zero mean and unit variance using statistics computed from $\mathcal{D}$, and outputs are denormalised prior to rollout. Spectral normalisation is applied to all linear layers except the log-variance head, following the original implementation. Log-variances are soft-clamped to the range $[\log \sigma^2_{\min}, \log \sigma^2_{\max}]$ via learnable bound parameters. Each member is trained by minimising the Gaussian NLL with a small regularisation term penalising the spread of the log-variance bounds.

We train an ensemble of 7 members and retain the 5 with the lowest holdout NLL as the elite subset. Policy evaluation uses only the elite models.

The per-transition uncertainty is defined as the maximum across elite members of the $\ell_2$ norm of the predicted standard deviation:
\begin{equation}
    u(x_t, a_t) = \max_{e \in \mathcal{E}} \left\| \sigma^{(e)}_\theta(x_t, a_t) \right\|_2,
\end{equation}
where $\mathcal{E}$ denotes the elite set. The penalised reward used for policy evaluation is then $\tilde{r}(x_t, a_t) = r(x_t, a_t) - \lambda \cdot u(x_t, a_t)$, where $\lambda$ is the penalty coefficient. The specific architecture and training hyperparameters are given in Table~\ref{tab:mopo_hyperparams}.

\begin{table}[h]
\caption{Hyperparameters for the MOPO dynamics ensemble.}
\label{tab:mopo_hyperparams}
\centering
\begin{tabular}{ll}
\toprule
Hyperparameter & Value \\
\midrule
Ensemble Size & 7 \\
Elite Size & 5 \\
Hidden Dimension & 200 \\
Hidden Layers & 4 \\
Activation & SiLU \\
Spectral Normalisation & Yes (trunk \& mean head) \\
Max Epochs & 2{,}000 \\
Early Stopping Patience & 20 epochs \\
Elite Holdout Size & 1{,}000 transitions \\
Batch Size & 256 \\
Learning Rate & $3 \times 10^{-4}$ \\
Optimizer & Adam \\
Bootstrap Sampling & Yes \\
\bottomrule
\end{tabular}
\end{table}

\paragraph{ROMI Baseline}

We implement ROMI \citep{qiao2026model}, which jointly learns a dynamics ensemble, an adaptive sample-reweighting network, and a SAC policy. Each ensemble member is a probabilistic model with the same Gaussian parameterisation over normalised state deltas as MOPO, but without spectral normalisation. The ensemble is first pre-trained for 50 epochs by minimising the standard Gaussian NLL on bootstrapped subsets of $\mathcal{D}$.

Following the pre-training phase, ROMI alternates between (i) SAC policy updates on mixed real and model data, and (ii) a bilevel optimisation that jointly refines the dynamics and an adaptive weighting network $w_\nu(x, a, x')$. The weighting network is an MLP that outputs per-transition weights in the range $[w_{\min}, w_{\max}]$ via a $\tanh$ output activation. The inner level of the bilevel objective trains dynamics member $\Phi^{(e)}_\theta$ with a weighted NLL:
\begin{equation}
    \mathcal{L}_{\text{inner}} = \mathbb{E}_{(x,a,x') \sim \mathcal{D}}\left[w_\nu(x, a, x') \cdot \ell_{\text{NLL}}(x, a, x'; \theta)\right],
\end{equation}
where $w_\nu$ is held fixed. The outer level then updates $\nu$ to minimise a robust value-aware loss. Specifically, given a one-step look-ahead from the updated dynamics $\hat{x}'= \mu_{\theta^+}(x, a)$, and a pessimistic value target computed by perturbing $x'$ with Gaussian noise of scale $\sigma_u$ and taking the minimum $Q$-value over $K$ samples:
\begin{equation}
    V^{-}(x') = \min_{k=1}^{K} \min\!\left(Q_1(x' + \epsilon_k,\, \pi(x' + \epsilon_k)),\, Q_2(x' + \epsilon_k,\, \pi(x' + \epsilon_k))\right), \quad \epsilon_k \sim \mathcal{N}(0, \sigma_u^2 I),
\end{equation}
the outer loss encourages the dynamics to predict next states whose value matches this robust target:
\begin{equation}
    \mathcal{L}_{\text{outer}} = \left(V(\hat{x}') - V^-(x')\right)^2.
\end{equation}

The SAC policy is trained on batches mixing real transitions (ratio $\rho = 0.5$) with model rollouts of horizon $H = 5$ stored in a replay buffer.

Following the ROMI paper, we use $\sigma_u = 0.1$ across all environments, except Hopper ($\sigma_u = 1.0$) and Walker ($\sigma_u = 0.01$).

The full set of hyperparameters is given in Table~\ref{tab:romi_hyperparams}.

\paragraph{HDTwin Baseline} We implement HDTwin \citep{holt2024automatically} using the official repository here: \url{https://github.com/samholt/HDTwinGen} (MIT License). This method prompts an LLM to propose a Python class that represents the functional form of a DT, which is then trained on the training data. Per-variable validation MSE is then calculated, and reported back to the LLM, which is prompted to suggest changes to improve this validation MSE. The functional form is then iteratively improved in this manner. We use \texttt{GPT-5} via the Azure API for the LLM, allow $3$ cycles of iterative improvement, and train each model for up to $2000$ epochs with early stopping and patience of $20$. All other hyperparameters are kept at default.

\begin{table}[h]
\caption{Hyperparameters for ROMI.}
\label{tab:romi_hyperparams}
\centering
\begin{tabular}{ll}
\toprule
Hyperparameter & Value \\
\midrule
\multicolumn{2}{l}{\textit{Dynamics Ensemble}} \\
Ensemble Size & 4 \\
Hidden Dimension &  \\
Hidden Layers & 2 \\
Activation & SiLU \\
Pre-train Epochs & 50 \\
Dynamics Learning Rate & $3 \times 10^{-4}$ \\
Bootstrap Sampling & Yes \\
\midrule
\multicolumn{2}{l}{\textit{Weighting Network}} \\
Hidden Dimension & 64 \\
Hidden Layers & 2 \\
Weight Range $[w_{\min}, w_{\max}]$ & $[0.5, 2.0]$ \\
Weight Learning Rate & $1 \times 10^{-4}$ \\
Bilevel Inner Learning Rate & $3 \times 10^{-4}$ \\
Adversarial Train Steps & 1{,}000 \\
Uncertainty Samples ($K$) & 20 \\
\midrule
\multicolumn{2}{l}{\textit{SAC Policy}} \\
Actor/Critic Hidden Dimension & 256 \\
Actor Learning Rate & $1 \times 10^{-4}$ \\
Critic Learning Rate & $3 \times 10^{-4}$ \\
Soft Update ($\tau$) & $5 \times 10^{-3}$ \\
\midrule
\multicolumn{2}{l}{\textit{Training Schedule}} \\
Epochs & 500 \\
Batch Size & 256 \\
Real Data Ratio ($\rho$) & 0.5 \\
Policy Updates per Epoch & 200 \\
Rollout Horizon ($H$) & 5 \\
Rollout Batch Size & 512 \\
\bottomrule
\end{tabular}
\end{table}

\subsection{Cancer Experiment}

\subsubsection{Environment Details}
We used the \texttt{GhaffariCancerEnv} from \texttt{DTR-Bench}/\texttt{DTRGym} \cite{luo2024dtr} (\url{https://github.com/GilesLuo/DTR-Bench}, \url{https://github.com/GilesLuo/DTRGym}, MIT License), which simulates the treatment of cancer with metastasis under combined radiotherapy and chemotherapy. The environment is based on the mathematical model by \citet{ghaffari2016mixed}.

\begin{itemize}
    \item $\mathcal{X}$: $d_\mathcal{X} = 7$, log-transformed cell counts and drug concentrations.
    \item $\mathcal{A}$: $d_\mathcal{A} = 2$, radiotherapy dose ($D \in [0, 10]$ Gy), chemotherapy concentration ($v_M \in [0, 8]$).
    \item We use `Setting 5' of the environment, representing a challenging medical scenario. This includes:
    \begin{itemize}
        \item State transition noise ($\sigma_{\text{state}} = 0.5$).
        \item Observation noise ($\sigma_{\text{obs}} = 0.2$).
        \item Pharmacokinetic/Pharmacodynamic parameter noise ($\sigma_{\text{pkpd}} = 0.1$).
        \item A 50\% probability of missing observations at any given timestep.
    \end{itemize}
\end{itemize}

\subsubsection{Training Policies}
For this experiment, we defined 5 interpretable clinical policies to construct $\Pi_{\text{train}}$:
\begin{enumerate}
    \item \textbf{No Treatment}: Baseline policy with zero intervention ($D=0, v_M=0$).
    \item \textbf{Fractionated Radiotherapy}: 2 Gy/day radiation, 5 days/week, with weekends off. No chemotherapy.
    \item \textbf{Metronomic Chemotherapy}: Continuous low-dose chemotherapy ($v_M=2.0$) administered daily. No radiotherapy.
    \item \textbf{Adaptive Combined Therapy}: A reactive strategy that monitors tumour burden. If the log-size exceeds a threshold (50\% of initial burden), it applies aggressive combination therapy ($D=2.0, v_M=4.0$). Otherwise, no treatment is given.
    \item \textbf{Aggressive Combined Therapy}: High dose of both modalities ($D=4.0, v_M=6.0$) administered daily.
\end{enumerate}

\subsubsection{Evaluation Policies}
To test the generalisation capabilities of \DTtwo, we evaluated the models on a set of 11 unseen policies, including $6$ PPO checkpoints along a training run of 200,000 steps, and the $5$ following manually-defined policies:

\begin{itemize}
    \item \textbf{Pulsed Chemotherapy}: High dose chemotherapy ($v_M=5.0$) administered once every 21 days.
    \item \textbf{Hypofractionated Radiotherapy}: High radiation dose ($D=8.0$) administered every 3 days.
    \item \textbf{Induction-Maintenance}: Aggressive combination therapy ($D=2.0, v_M=4.0$) for the first 30 days, followed by low-dose maintenance chemotherapy ($v_M=1.5$).
    \item \textbf{Alternating Modality}: A 14-day cycle consisting of 7 days of daily radiotherapy ($D=2.0$) followed by 7 days of daily chemotherapy ($v_M=3.0$).
    \item \textbf{Dose Escalation}: Doses for both modalities linearly ramp up from 0 to a maximum ($D=3.0, v_M=5.0$) over the first 100 days.
\end{itemize}

\subsubsection{Digital Twin Training}
The training pipeline followed the same structure as the continuous control experiments, with the following environment-specific adjustments:
\begin{itemize}
    \item \textbf{Dataset}: We collected 1,000 steps from each of the 5 training expert policies.
    \item \textbf{Hyperparameters}: We used a batch size of 512 and a discount factor $\gamma=0.99$. The Q-networks were trained for 500 epochs.
\end{itemize}

\subsection{Lambda Ablation}

For this experiment, we use the ResNet backbone in all models. All settings are kept the same as in the continuous control experiments, except we vary $\lambda \in \{0,0.1,0.25,0.5,0.75\}$.

\newpage

\section{Unseen Policy Rankings in Continuous Control Environments}
\label{app:ood-results}
We now report some further results to assess how \DTtwo performs in ranking policies that are unseen during training. Here, we compare $\mathcal{L}_\text{sim}^\text{NLL}$ and $\mathcal{L}_\text{\DTtwo}$ in ranking unseen policies across the three smaller continuous control environments from $\S$\ref{ssec:expressive-hypothesis-space} (\texttt{Pendulum}, \texttt{Lunarlander}, \texttt{Hopper}). Again, we use the ResNet architecture, for consistency with $\S$\ref{ssec:cancer-exp}.

In this case, since it is not so clear how `expert defined' policies might look, we investigate rankings of five simple policies that are not seen during training, that involve taking 1) random actions, 2) constant $\mathbf{0}$ action, 3) constant minimum action, 4) constant median action, and 5) constant maximum action. These policies are generally much less sophisticated, and worse-performing, than the PPO policies used to collect the datasets and train \DTtwo in $\S$\ref{ssec:expressive-hypothesis-space}, so they can be considered quite out of distribution.

In Table~\ref{tab:ood-results} we report the ranking performance of $\mathcal{L}_\text{sim}^\text{NLL}$ and $\mathcal{L}_\text{\DTtwo}$ DTs across these simple policies. We see that \DTtwo training still leads to  an increase in ranking performance, although, as expected, to a lesser extent than when ranking policies that are seen during training.

\begin{table}[h]
\centering
\caption{Regret and Spearman's correlation on preference orderings of five unseen constant-action policies in continuous control environments. We report averages over $5$ seeds, with standard errors.}
\label{tab:ood-results}
\begin{tabular}{lcc cc cc}
\toprule
& \multicolumn{2}{c}{\texttt{Pendulum}} 
& \multicolumn{2}{c}{\texttt{LunarLander}} 
& \multicolumn{2}{c}{\texttt{Hopper}} \\
\cmidrule(lr){2-3} \cmidrule(lr){4-5} \cmidrule(lr){6-7} 
Loss 
& Regret ($\downarrow$) & Spearman ($\uparrow$)
& Regret ($\downarrow$) & Spearman ($\uparrow$)
& Regret ($\downarrow$) & Spearman ($\uparrow$)\\
\midrule

$\mathcal{L}_\text{sim}^\text{NLL}$ 
& $8.70$\,{\scriptsize($6.03$)} & $0.340$\,{\scriptsize($0.201$)}
& $211.78$\,{\scriptsize($83.92$)} & $0.760$\,{\scriptsize($0.058$)}
& $1.83$\,{\scriptsize($0.91$)} & $0.811$\,{\scriptsize($0.042$)}\\

$\mathcal{L}_\text{\DTtwo}$ 
& $5.37$\,{\scriptsize($4.35$)} & $0.580$\,{\scriptsize($0.169$)}
& $102.15$\,{\scriptsize($63.56$)} & $0.780$\,{\scriptsize($0.061$)}
& $0.61$\,{\scriptsize($0.60$)} & $0.911$\,{\scriptsize($0.042$)}\\
\bottomrule
\end{tabular}
\end{table}

\newpage

\section{Comparisons with MSE Simulation Loss}
\label{app:continuous-control-mse}
We display results for the continuous control experiments using $\mathcal{L}_\text{sim}^\text{MSE}$ for the simulation loss, rather than $\mathcal{L}_\text{sim}^\text{NLL}$ in Figure \ref{fig:continuous-control-mse}. We see similar takeaways from these results as those in $\S$\ref{ssec:expressive-hypothesis-space}. Namely, \DTtwo outperforms its standard counterpart in $76.7\%$ of the environment-architecture-metric couplings. In general, the DTs using $\mathcal{L}_\text{sim}^\text{MSE}$ as their simulation loss seem to perform worse than when using $\mathcal{L}_\text{sim}^\text{NLL}$.

\begin{figure*}[th]
    \centering
    \begin{minipage}{0.32\linewidth}
        \centering
        \includegraphics[width=\linewidth]{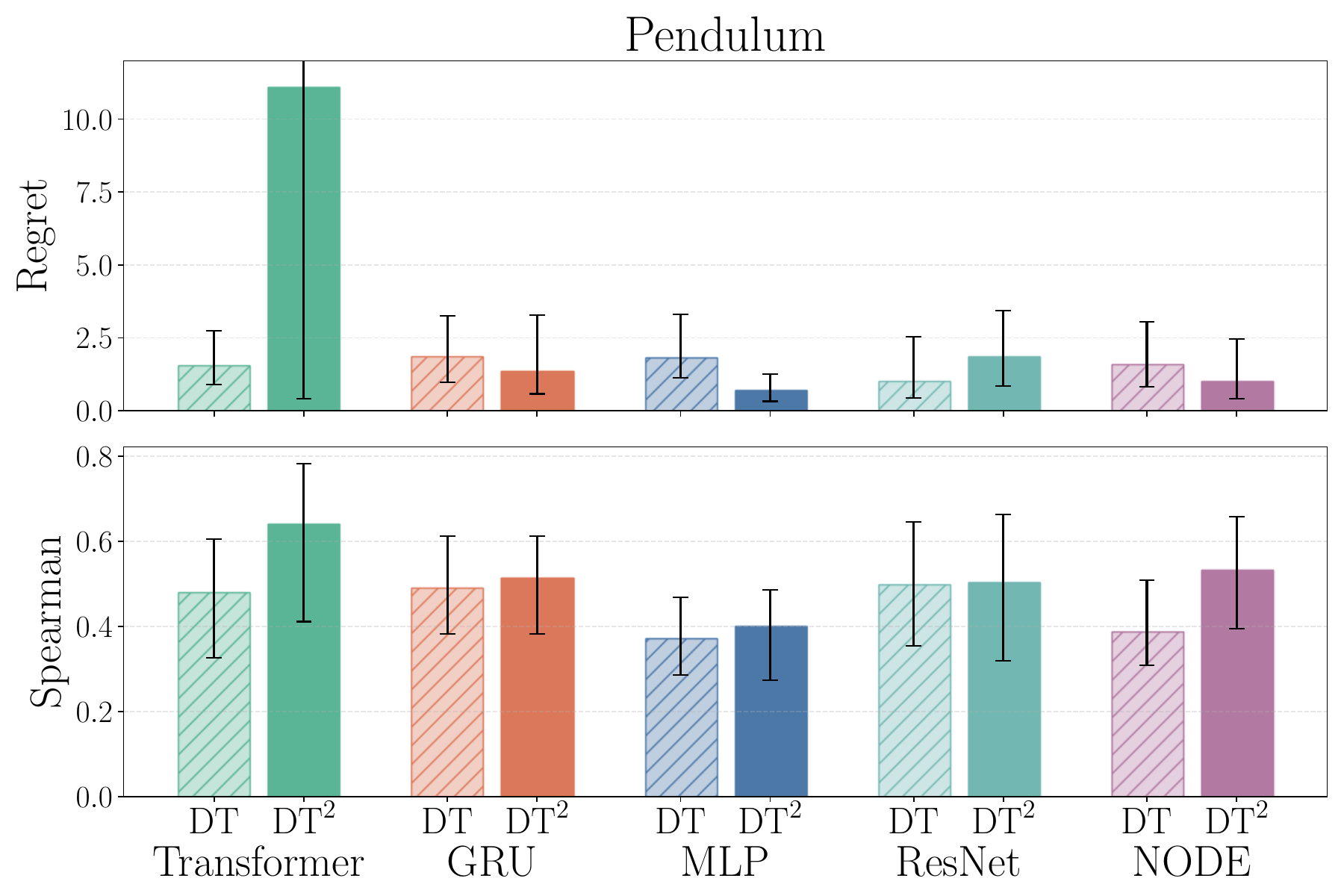}
    \end{minipage}\hfill
    \begin{minipage}{0.32\linewidth}
        \centering
        \includegraphics[width=\linewidth]{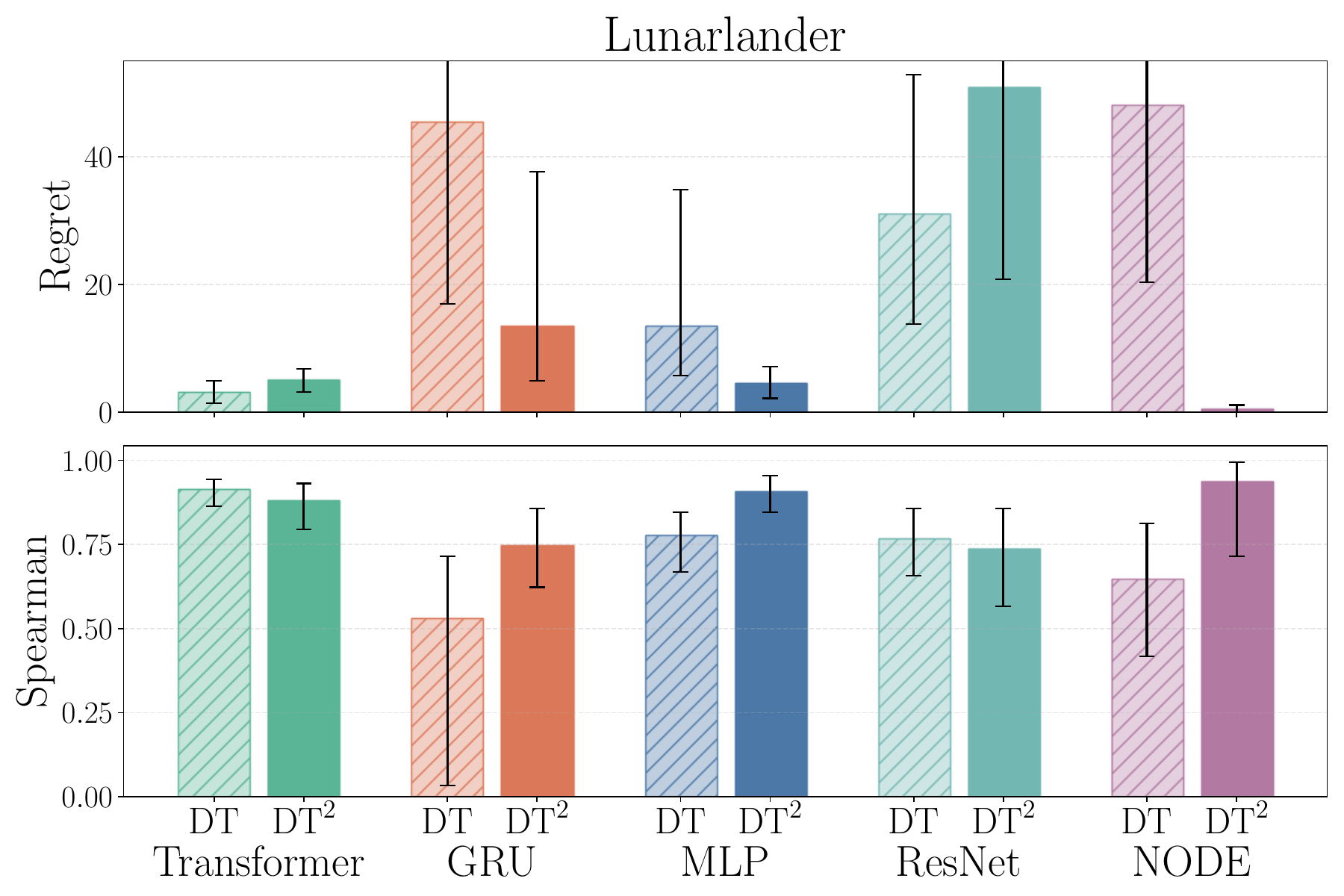}
    \end{minipage}\hfill
    \begin{minipage}{0.32\linewidth}
        \centering
        \includegraphics[width=\linewidth]{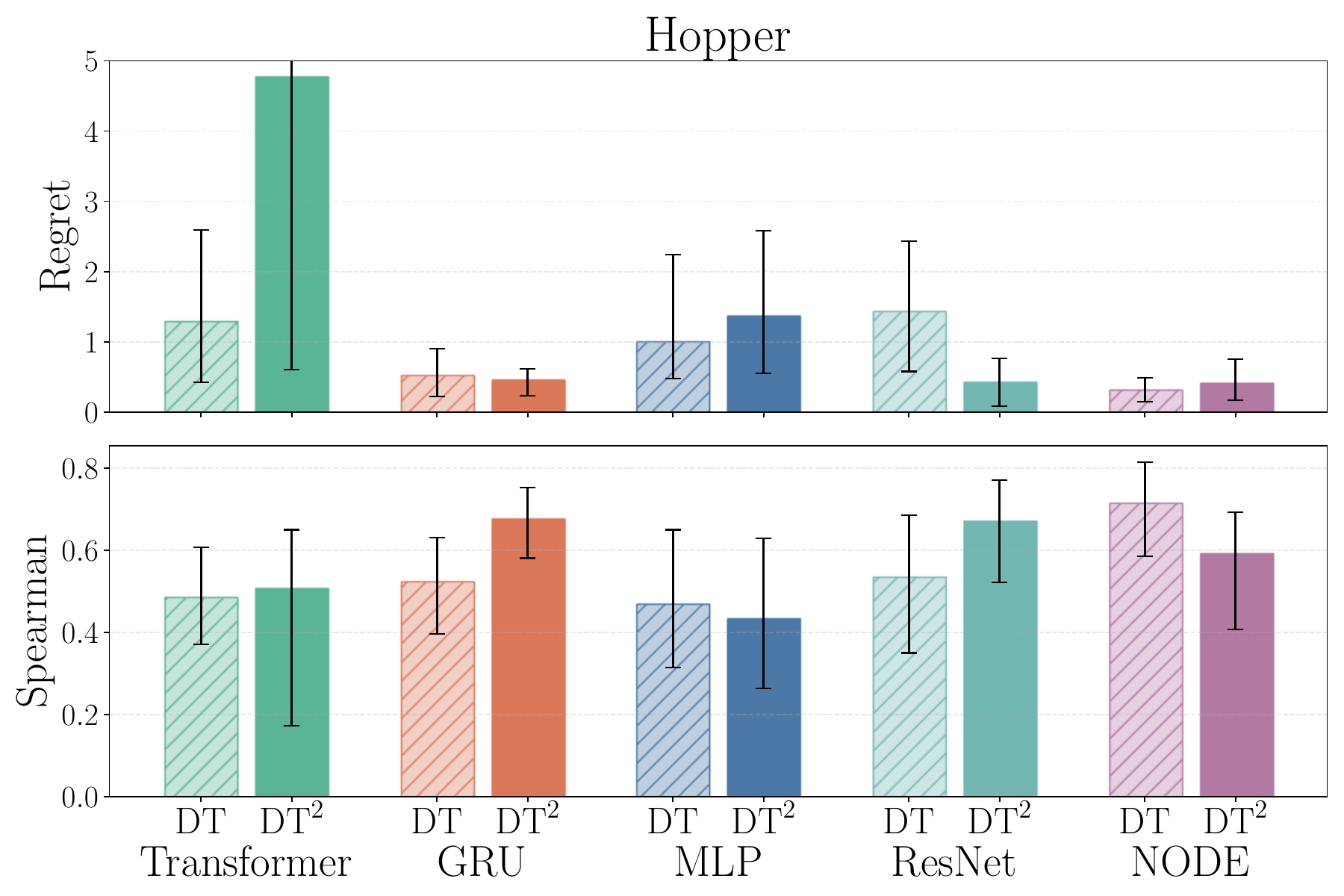}
    \end{minipage}

    \vspace{-0.3em}

    \begin{minipage}{0.32\linewidth}
        \centering
        \includegraphics[width=\linewidth]{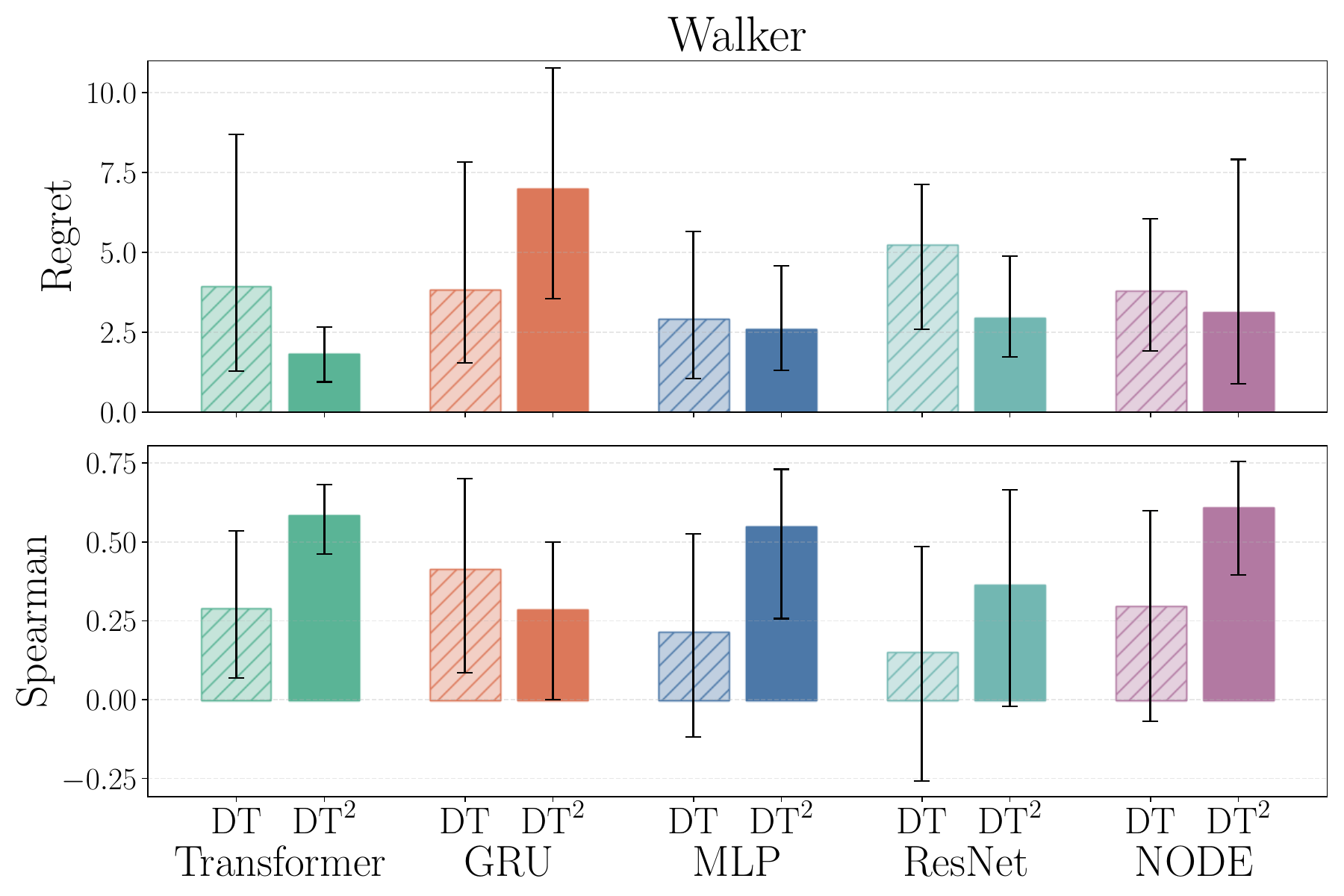}
    \end{minipage}\hfill
    \begin{minipage}{0.32\linewidth}
        \centering
        \includegraphics[width=\linewidth]{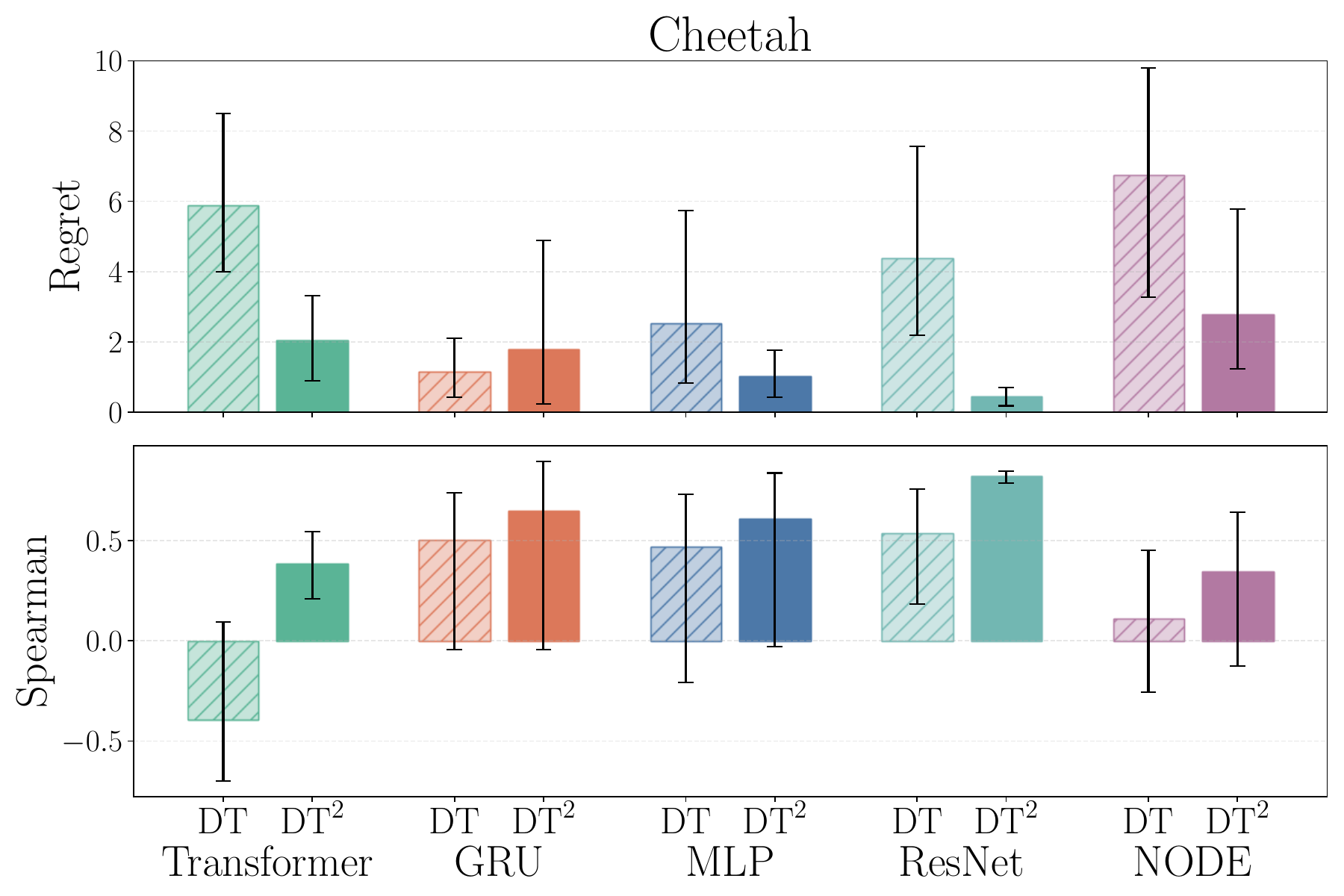}
    \end{minipage}\hfill
    \begin{minipage}{0.32\linewidth}
        \centering
        \includegraphics[width=\linewidth]{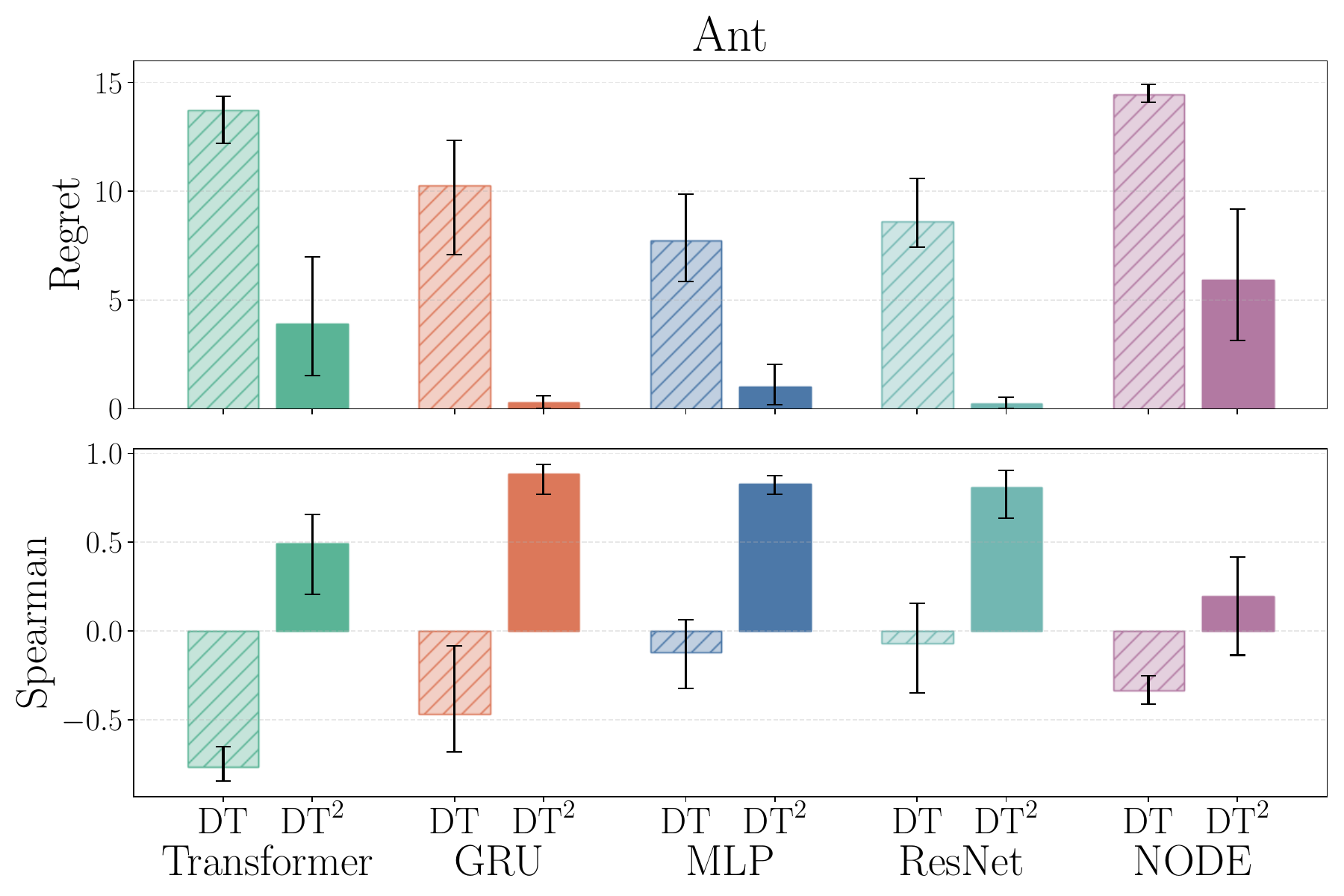}
    \end{minipage}

    \caption{Regret and Spearman's correlation for preference orderings from $\mathcal{L}_\text{sim}^\text{MSE}$ (hashed) and $\mathcal{L}_\text{\DTtwo}$ (solid) DTs across base architectures in six continuous control environments. We report averages over $10$ seeds, with 95\% CIs. For visual purposes, the top end of some error bars are cropped out.}
    \label{fig:continuous-control-mse}
\end{figure*}

\newpage

\section{Ranking Loss Ablation}
\label{app:ranking-ablation}

We now consider some variants to the differentiable ranking loss used for \DTtwo. Let $\mathbf{y} \in \mathbb{R}^{M}$ be the vector of target proxy values (derived from FQE) for the $M$ policies in a batch, and $\hat{\mathbf{y}} \in \mathbb{R}^{M}$ be the corresponding vector of cumulative returns predicted by the DT rollouts. Let $\mathcal{C} = \{(i, j) \mid 1 \le i < j \le M\}$ denote the set of all unique pairs in the batch.

\paragraph{Pairwise Hinge Loss.}
This variant enforces that the DT's predicted value difference between two policies preserves the sign of the ground-truth difference, with a sufficient margin. To ensure the margin scales appropriately across different environments with varying reward magnitudes, we implemented an adaptive margin $\epsilon$ based on the mean absolute value of the targets:
\begin{equation}
    \epsilon = 0.1 \cdot \frac{1}{M} \sum_{k=1}^M |y_k|.
\end{equation}
The loss is defined as:
\begin{equation}
    \mathcal{L}_{\text{Hinge}}(\theta) = \frac{1}{|\mathcal{C}|} \sum_{(i,j) \in \mathcal{C}} \max \left(0, \epsilon - \text{sign}(y_i - y_j)(\hat{y}_i - \hat{y}_j) \right).
\end{equation}
This penalises the model if the predicted ordering is incorrect or if the separation between the pair is smaller than $\epsilon$.

\paragraph{ListNet Loss.}
ListNet \citep{cao2007learning} treats the ranking problem as a probability distribution matching problem. It maps the vector of values to a probability distribution using a softmax function.

Let $P_y$ and $P_{\hat{y}}$ be the softmax distributions of the target and predicted values, respectively, controlled by a temperature parameter $\alpha$:
\begin{equation}
    P_y(\pi_i) = \frac{\exp(y_i / \alpha)}{\sum_{k=1}^M \exp(y_k / \alpha)}, \quad P_{\hat{y}}(\pi_i) = \frac{\exp(\hat{y}_i / \alpha)}{\sum_{k=1}^M \exp(\hat{y}_k / \alpha)}.
\end{equation}
The loss is calculated as the Cross-Entropy between these distributions:
\begin{equation}
    \mathcal{L}_{\text{ListNet}}(\theta) = - \sum_{i=1}^M P_y(\pi_i) \log \left( P_{\hat{y}}(\pi_i) \right).
\end{equation}
Unlike the pairwise approaches, ListNet considers the entire list of policies simultaneously.

\subsection{Continuous Control Results}
We display results for the continuous control experiments with these variants in Figure \ref{fig:ranking-ablation}, and report their average performances in Table \ref{tab:ranking-ablation}. Across all environment-architecture-metric couplings, our smoothed Kendall formulation has the highest win-rate. While the other ranking formulations can improve upon $\mathcal{L}_\text{sim}^\text{NLL}$, our proposed method tends to work best.

In Table \ref{tab:ranking-ablation}, we can see that our smoothed Kendall formulation achieves the lowest average regret, and the highest average Spearman's correlation. Interestingly, in this case, while the Hinge and ListNet formulations improve on $\mathcal{L}_\text{sim}^\text{NLL}$ in terms of average Spearman's, they are worse in terms of average regret. As we can see from Figure \ref{fig:ranking-ablation}, they each have some cases where they have a very poor regrets (e.g. in the \texttt{Pendulum} and \texttt{Lunarlander} environments), which drives up their averages for this metric.

\begin{table}[ht]
    \centering
    \caption{Average Spearman's correlation and regret for different DT variants across all architectures and continuous control task.}
    \label{tab:ranking-ablation}
    \begin{tabular}{c c c c}
    \toprule
        & Loss & Regret ($\downarrow$) & Spearman ($\uparrow$) \\
    \midrule
        & $\mathcal{L}_\text{sim}^\text{NLL}$ & $4.08$ & $0.45$ \\
        \hdashline
        \multirow{3}{*}{\rotatebox[origin=c]{90}{$\mathcal{L}_\text{\DTtwo}$}}
        & Kendall & $1.88$ & $0.66$\\
        & Hinge & $4.57$ & $0.59$ \\
        & ListNet & $11.51$ & $0.53$\\
    \bottomrule
    \end{tabular}
\end{table}

\begin{figure*}[th]
    \centering
    \begin{minipage}{0.32\linewidth}
        \centering
        \includegraphics[width=\linewidth]{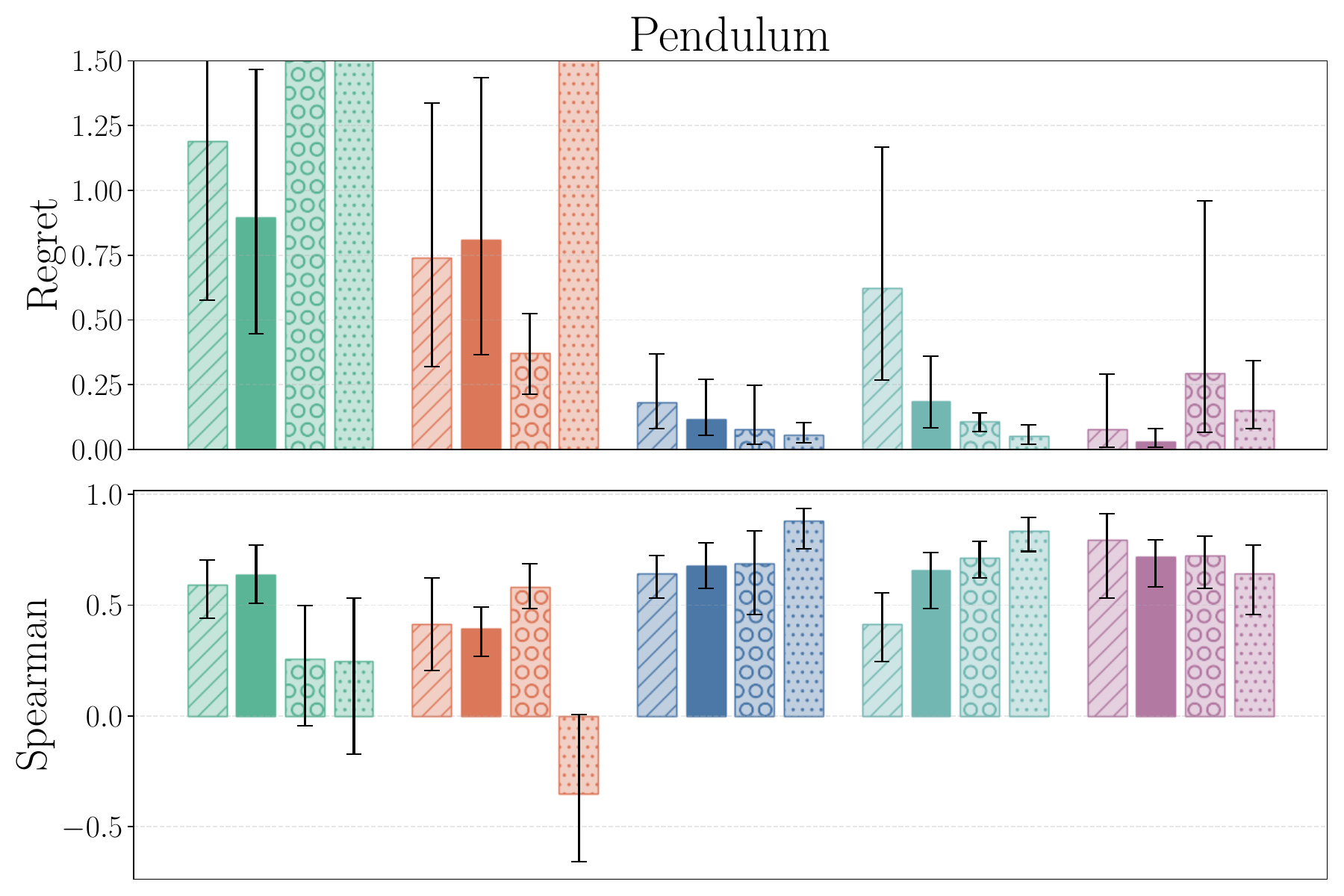}
    \end{minipage}\hfill
    \begin{minipage}{0.32\linewidth}
        \centering
        \includegraphics[width=\linewidth]{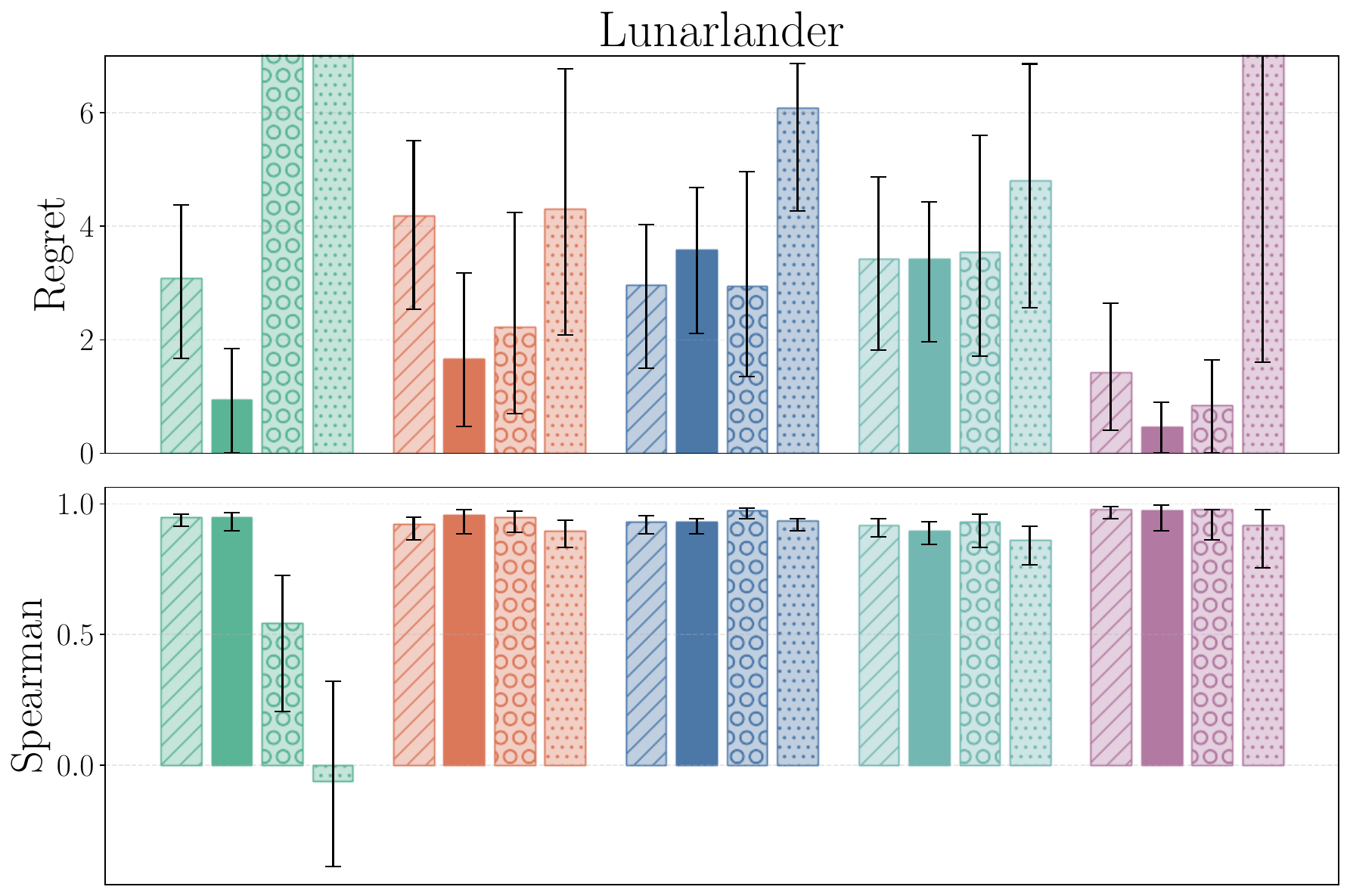}
    \end{minipage}\hfill
    \begin{minipage}{0.32\linewidth}
        \centering
        \includegraphics[width=\linewidth]{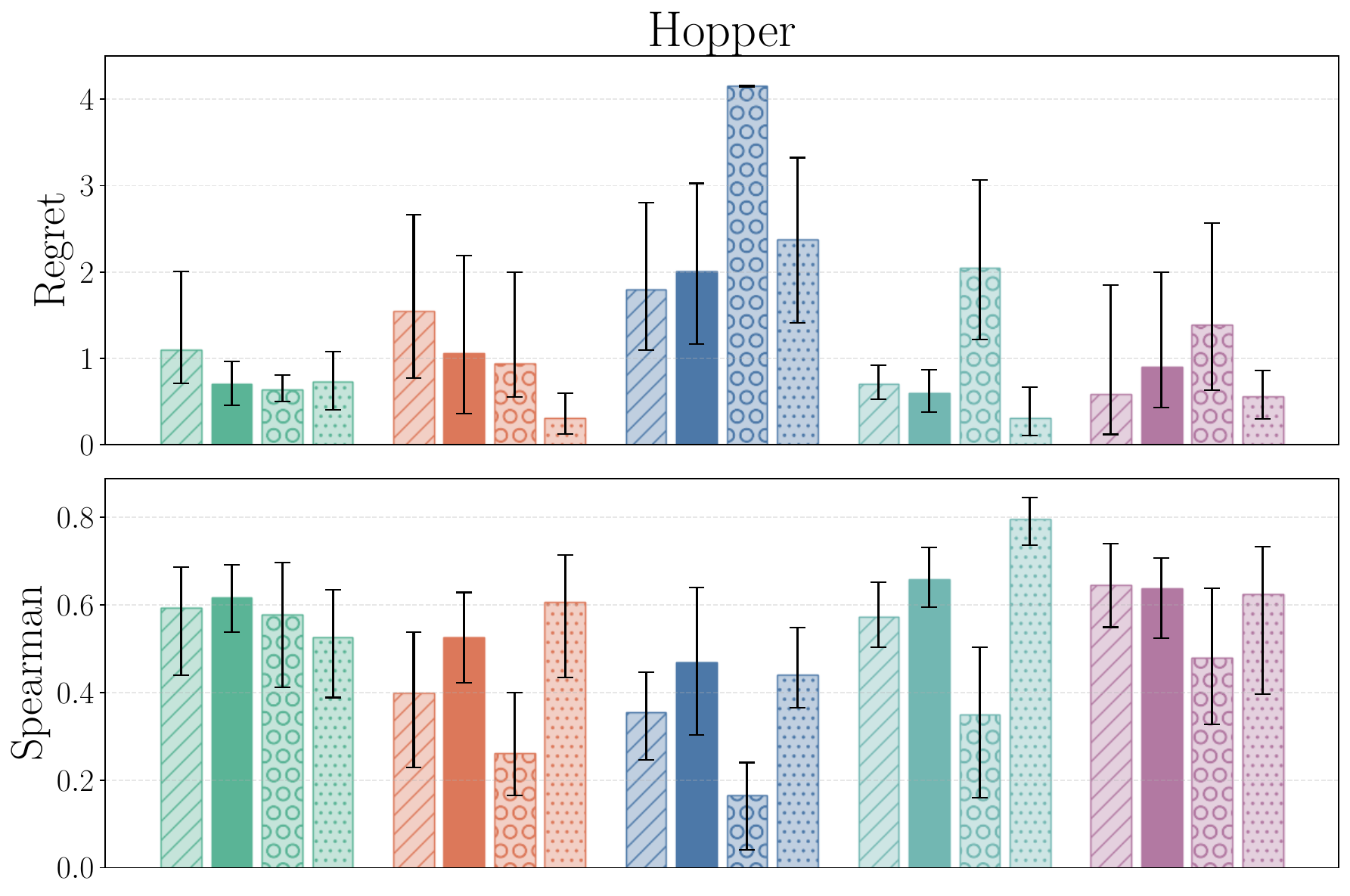}
    \end{minipage}

    \vspace{-0.3em}

    \begin{minipage}{0.32\linewidth}
        \centering
        \includegraphics[width=\linewidth]{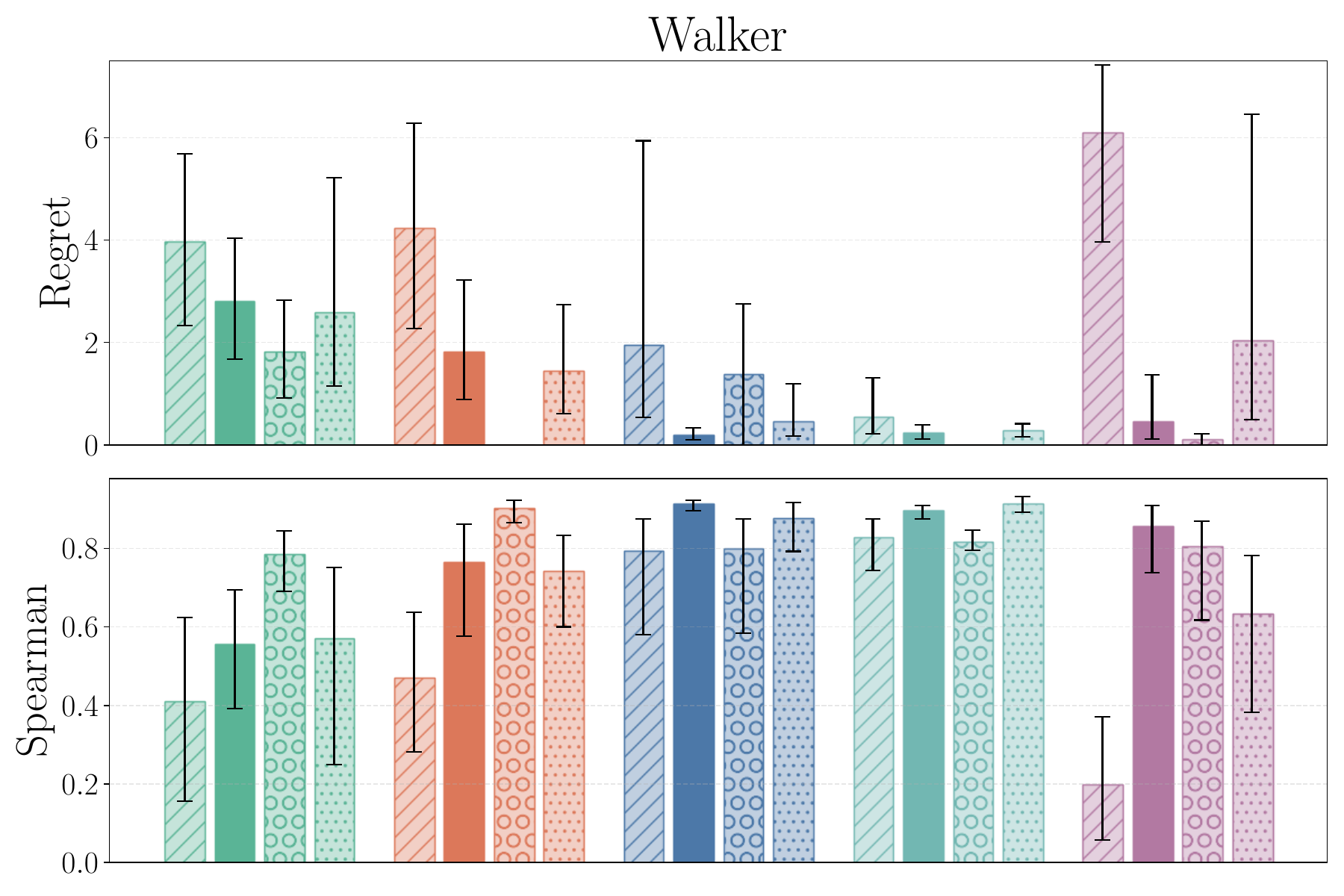}
    \end{minipage}\hfill
    \begin{minipage}{0.32\linewidth}
        \centering
        \includegraphics[width=\linewidth]{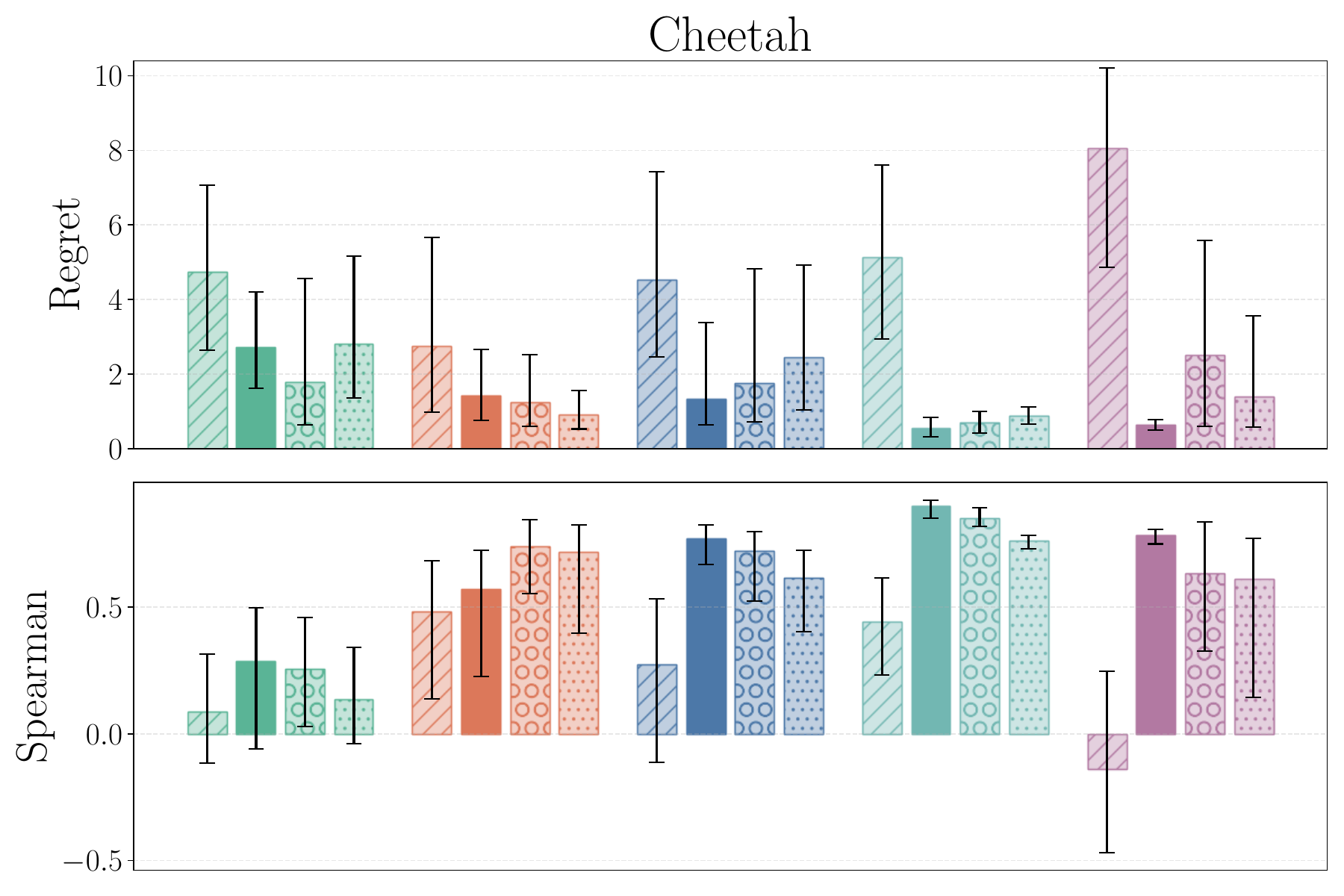}
    \end{minipage}\hfill
    \begin{minipage}{0.32\linewidth}
        \centering
        \vspace{20pt}

        \includegraphics[width=\linewidth]{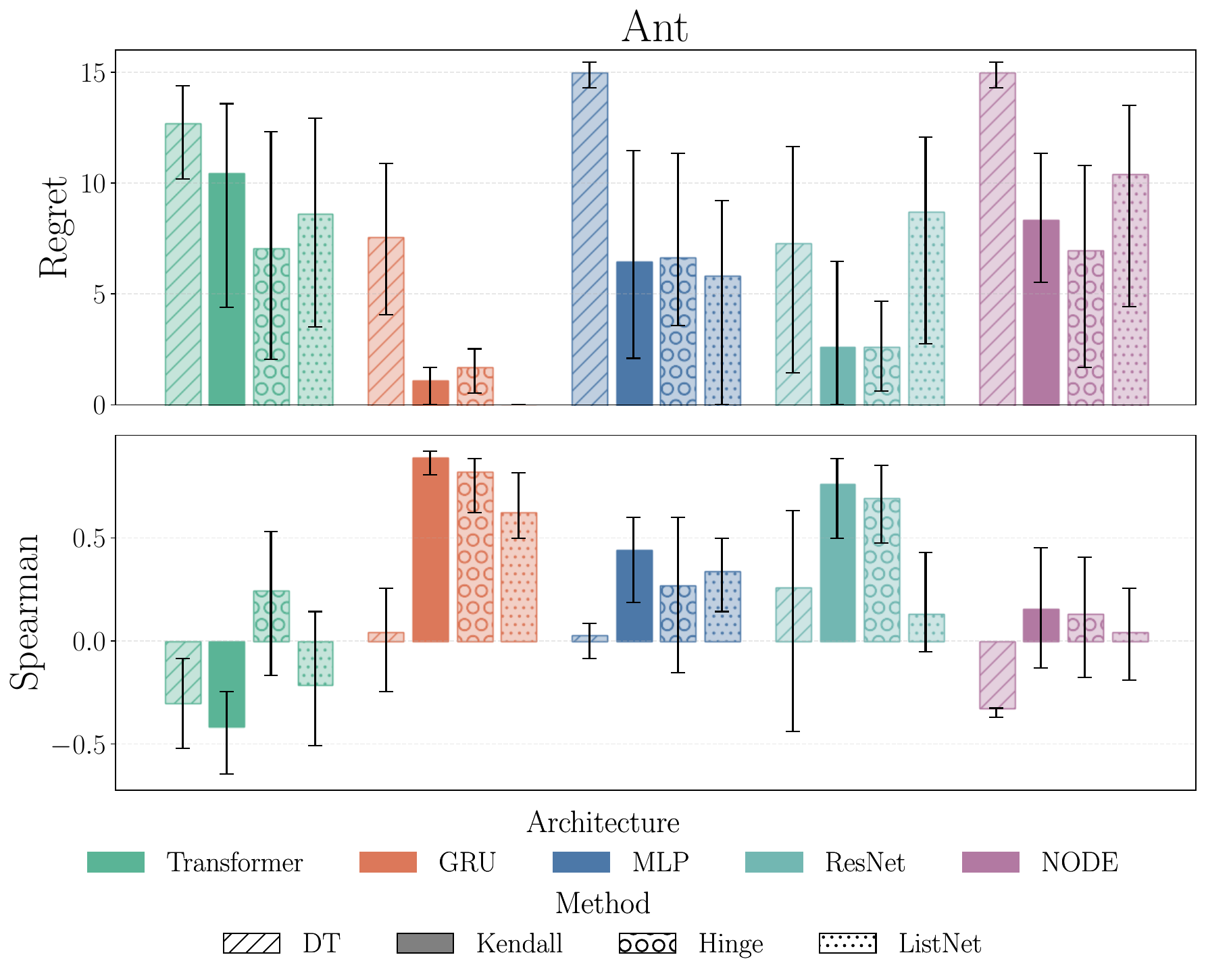}
    \end{minipage}

    \caption{Regret and Spearman's correlation for preference orderings from DTs trained via $\mathcal{L}_\text{sim}^\text{NLL}$ DTs (hashed) and $\mathcal{L}_\text{\DTtwo}$ with ranking losses of our smoothed Kendall's (solid), a hinge loss (circles), and a ListNet loss (dotted) across different base architectures. We report averages over $10$ seeds, with 95\% CIs. For visual purposes, the top end of some bars are cropped out.}
    \label{fig:ranking-ablation}
\end{figure*}

\end{document}